\documentclass[acmlarge,screen,nonacm, authorversion]{acmart}
\usepackage{subcaption}
\usepackage[noabbrev]{cleveref}
\usepackage{amsmath}
\usepackage{algorithm}
\usepackage[noend]{algpseudocode}

\newtheorem{thm}{Theorem}
\newtheorem{lem}{Lemma}

\newtheorem{cor}{Corollary}
\newtheorem{defn}{Definition}[section]
\newtheorem{ass}{Assumption}

\DeclareMathOperator*{\argmin}{arg\!min}

\AtBeginDocument{%
  \providecommand\BibTeX{{%
    \normalfont B\kern-0.5em{\scshape i\kern-0.25em b}\kern-0.8em\TeX}}}

\settopmatter{printccs = False, printacmref = False, printfolios = False}
\setcopyright{none}

\begin{document}

\title{Intervention-Assisted Policy Gradient Methods for Online Stochastic Queuing Network Optimization: Technical Report}

\author{Jerrod Wigmore}
\email{jwigmore@mit.edu}
\affiliation{%
  \institution{Massachusetts Institute of Technology}
  \city{Cambridge}
  \state{Massachusetts}
  \country{USA}
}

\author{Brooke Shrader}
\affiliation{%
  \institution{MIT Lincoln Laboratory}
  \city{Lexington}
  \state{Massachusetts}
  \country{USA}
}

\author{Eytan Modiano}
\affiliation{%
  \institution{Massachusetts Institute of Technology}
  \city{Cambridge}
  \state{Massachusetts}
  \country{USA}
}

\renewcommand{\shortauthors}{Wigmore, Shrader, \& Modiano}

\begin{abstract}

\end{abstract}
\begin{abstract}

Deep Reinforcement Learning (DRL) offers a powerful approach to training neural network control policies for stochastic queuing networks (SQN). However, traditional DRL methods rely on offline simulations or static datasets, limiting their real-world application in SQN control.  This work proposes Online Deep Reinforcement Learning-based Controls (ODRLC) as an alternative, where an intelligent agent interacts directly with a real environment and learns an optimal control policy from these online interactions. SQNs present a challenge for ODRLC due to the unbounded nature of the queues within the network resulting in an unbounded state-space.  An unbounded state-space is particularly challenging for neural network policies as neural networks are notoriously poor at extrapolating to unseen states.  To address this challenge, we propose an intervention-assisted framework that leverages strategic interventions from known stable policies to ensure the queue sizes remain bounded.  This framework combines the learning power of neural networks with the guaranteed stability of classical control policies for SQNs.  We introduce a method to design these intervention-assisted policies to ensure strong stability of the network. Furthermore, we extend foundational DRL theorems for intervention-assisted policies and develop two practical algorithms specifically for ODRLC of SQNs. Finally, we demonstrate through experiments that our proposed algorithms outperform both classical control approaches and prior ODRLC algorithms.

\end{abstract}

\thanks{DISTRIBUTION STATEMENT A. Approved for public release. Distribution is unlimited.
This material is based upon work supported by the Department of the Air Force under Air Force Contract No. FA8702-15-D-0001. Any opinions, findings, conclusions or recommendations expressed in this material are those of the author(s) and do not necessarily reflect the views of the Department of the Air Force.

© 2024 Massachusetts Institute of Technology.

Delivered to the U.S. Government with Unlimited Rights, as defined in DFARS Part 252.227-7013 or 7014 (Feb 2014). Notwithstanding any copyright notice, U.S. Government rights in this work are defined by DFARS 252.227-7013 or DFARS 252.227-7014 as detailed above. Use of this work other than as specifically authorized by the U.S. Government may violate any copyrights that exist in this work.}
\maketitle

\section{Introduction}

In the field of Deep Reinforcement Learning (DRL), agents are often trained using offline simulated environments prior to being deployed on the real-world environment. DRL is a promising technique for training stochastic network control agents.  However, the traditional simulation-based training paradigm has two major pitfalls. If the true network dynamics are not able to be accurately captured in simulation, then the agent trained on the simulation dynamics may perform poorly on the real-network.  This is often referred to as the sim-to-real gap \cite{valassakis2020,salvato2021}. Additionally, the policies of agents are often overfit to the training environments, and thus struggle to generalize to unseen environments in their deployment. \cite{zhang2018a,farebrother2020,  zhang2018}. In the context of SQN control, an agent would have to be trained on all possible dynamics of a particular network if the true network dynamics are not known with certainty. To overcome these limitations, we propose an Online Deep Reinforcement Learning-based Controls (ODRLC) paradigm for training SQN control agents.  In ODRLC, an intelligent agent directly interacts with a real-world environment and learns to optimize its policy through these online interactions.  This approach ensures the agent's policy is optimized for the true environment and does not require access to simulations  prior to deployment.

However, applying ODRLC to SQN control tasks introduces additional challenges.  The infinite buffer model is widely used in the network control literature due to its analytical tractability, and as a reflection of the fact that network buffers are often exceedingly large in practice. This presents a challenge for utilizing neural networks (NNs) as policy or value function approximators as the state-space under the infinite buffer assumption is unbounded and NNs are poor at extrapolating or generalizing to unseen inputs. The combination of an unbounded state-space and NN dependent policy creates a catastrophic feedback loop in the ODRLC setting. When an agent encounters an unseen state, its will take a poor action due to the poor extrapolation capabilities of the neural network(s) of which its policy depend on.  This sub-optimal action drives the agent further into the unexplored region of the state-space causing a never-ending cycle.  We call this the \textit{extrapolation loop} of unbounded states-spaces. This feedback loop is mitigated in offline simulation based training by occasionally resetting the environment's state. However, state resets may incur a huge cost or be infeasible in ODRLC. This notion of ensuring the environment state remains within a finite region of the state-space is highly related to the notion of strong stability that is often desired for SQN control algorithms.

This work addresses the challenge of ODRLC for SQN control tasks with unbounded state spaces. We propose a novel intervention-assisted agent framework that leverages a known stable policy to guarantee network stability while incorporating a NN policy for exploration and policy improvement. We prove these intervention-assisted policies are strongly stable, enabling their use for ODRLC. We extend key DRL theorems to the intervention-assisted setting, and introduce two practical ODRLC algorithms.  Our experiments show that these algorithms outperform existing SQN control and DRL-based methods in the ODRLC setting.

\subsection{Related Works}
Despite the wide applicability of queuing network models to various domains such as communication networks, manufacturing, and transportation, and their rich historical context in the controls literature, the integration of DRL for SQN controls remains a relatively underexplored avenue. The authors of \cite{dai2022} leverage DRL to optmize for delay in SQN control tasks that are similar to those studied in this paper, however their methods are not developed for the ODRLC setting. In \cite{raeis2021}, the authors use Deep Deterministic Policy Gradient  to learn queuing network control policies via offline environments that provide explicit guarantees on the end-to-end delay of the policy.   
Each of these aforementioned works uses the standard offline simulation-based training paradigm of DRL and thus the algorithms do not extend well into the ODRLC setting.

The ODRLC setting is most similar to \textit{continuing} or \textit{average reward} Reinforcement Learning. In \cite{zhang2021}, the authors provides a novel policy improvement theorem for the average-reward case, which is fundamental in the development of trust-region methods including PPO.  Ma et. al propose a unified policy improvement theorem that combines both the average reward and discounted reward settings in addition to addressing the Average Value Constraint problem that arises in average reward DRL \cite{ma2021}. The theoretical results in both \cite{zhang2021,ma2021} hinge on the assumption that  the state-space is finite and thus don't apply to environments with unbounded state-spaces such as queuing networks. In \cite{pavse2023},  the authors develop a Lyapunov-inspired reward shaping approach that encourages agents to learn a stable policy for online DRL over unbounded state-spaces.
Safe-DRL is a branch of DRL which similarly leverages interventions during training. The goal in Safe-DRL is for an agent to maximize some reward function while also satisfying some safety constraints. The authors of \cite{wang2018} uses human interventions to aid in robotic navigation tasks.  Similarly, in \cite{wagener2021} automatic advantage-based interventions are used to enforce safety constraints while still using DRL algorithms designed for unconstrained tasks.  These interventions align with our strategy, emphasizing the importance of external guidance to ensure stability and safety in the training process.

\section{Preliminaries} \label{sec: prelim}
\subsection{Stochastic Queuing Network Model}

\begin{figure}
    \centering
    \includegraphics[width=0.6\linewidth]{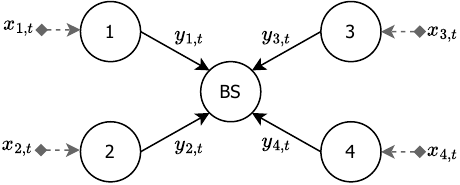}
    \caption{\textbf{(SH2)} An example of a single-hop wireless network. Packets arrive to each user according to user-dependent arrival distributions.  All packets are destined for the base-station. At each time step, the central controller chooses from one of the four links to activate.}
    \label{fig:SH2}
\end{figure}

\begin{figure}
    \includegraphics[width=0.6\linewidth]{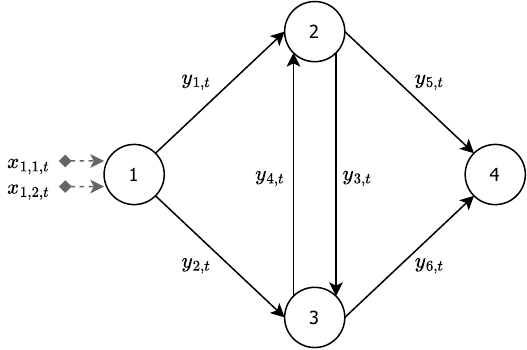}
    \caption{\textbf{(MH1)} An example of a multi-hop network.  Packets from two different classes arrive to node $1$ and all packets are destined for node $4$. At each time-step, the central network controller must choose an action $\mathbf a_t$ which dictates how many packets from each class is transmitted over each directed link.}
    \label{fig:MH1}
\end{figure}
In this paper, we focus on the objective of delay minimization for general discrete time SQNs with Markovian dynamics.  Under these models, the delay minimization task is well modeled by a Markov decision process.  The networks under consideration consist of nodes connected by directed links, with each node hosting one or more queues equipped with unbounded buffers that store undelivered packets.  Let $\mathbf q_t=\{q_{i,t}\}_{\forall i}$ denote the vector of all queue backlogs  within the network, and let $\bar q_t=\sum_{i}q_{i,t}$ denote the sum of backlogs across all queues within the network at the beginning of time $t$. The delay minimization objective is equivalent to minimizing the long-term average queue size \cite{little1961}. Thus our objective function can be defined as: 
\begin{equation}
    \min \lim_{T\rightarrow \infty} \frac{1}{T}\sum_{t=0}^{T-1}\bar q_t
\end{equation}

For the SQN models considered, two random processes -- stochastic packet arrivals and stochastic link capacities -- govern the dynamics, along with the actions taken by a central network controller or agent.   All packets belong to one of $K\geq 1$ traffic classes, where each class has an associated packet arrival distribution, arrival node, and a destination node.   For any class $k$,  $x_{k,t}$ packets  arrives at time $t$, where $x_{k,t}$ is drawn i.i.d. from a finite discrete distribution $\mathbb{P}(x_k)$.  The notation $\mathbf{x}_t=\{x_{k,t}\}_{\forall k}$ denotes the vector of all arrivals in time $t$.  

We denoted the capacity  of link $m$ at time step $t$ by $y_{m,t}$, which is also referred to as link $m$'s link state. At the beginning of each time step $y_{m,t}$ is sampled i.i.d. from a finite discrete distribution $\mathbb P(y_{m,t})$.  We use $\mathbf y_t=\{y_{m,t}\}_{\forall m}$ to denote the set of all link states over all $M$ links at time $t$.  We assume that arrival and link state distributions are mutually independent and independent of the overall network state. The network state at time $t$ is captured by $\mathbf{s}_t=(\mathbf{q}_t, \mathbf{y}_t)$, encompassing both the queue states $\mathbf q_t$ and link states $\mathbf y_t$.  At the start of each time step, the central network controller observes $\mathbf{s}_t$ and selects an action $\mathbf{a}_t$ from its policy $\pi$. The set of allowable actions depend on the specific network instance and its current state. An action $\mathbf a_t$ is a vector that specifies the amount of packets to transmit over each link for each class. The central controller aims to efficiently route all packets to their destinations by choosing which packets are transmitted on each link during each time step. A packet leaves the network once it arrives to its destination node. In \Cref{sec: experiments}, we test our algorithm on the following SQN control tasks: 
\subsubsection{Single-Hop Wireless Network Scheduling Task}
For single-hop wireless scheduling problems, the network instance is described by a set of $K$ user nodes, a base-station, and a single link between each user and the base-station.  There is a traffic class associated with each user, and the base-station serves as the destination node for all user's traffic. To model wireless interference constraints, only a single link may be activated by the central controller in each time step. When the central controller selects user $k$'s link at time step $t$, the number of packets that are transmitted to the base station is $a_{k,t}=\min\{q_{k,t}, y_{k,t}\}$. This constraint reflects that user $k$ can only transmit the number of packets in its queue $q_{k,t}$, and cannot exceed the link's capacity $y_{k,t}$. In the reinforcement learning setting, the central controller aims to learn a state-dependent scheduling policy $\pi$ that minimizes the long-term average backlog.

\subsubsection{Multi-hop Network Control Task}
For multi-hop networks, the network instance is described by a graph $\mathcal G(\mathcal V, \mathcal E)$ where $\mathcal V$ denote the set of nodes and $\mathcal E$ denotes the set of directed links between nodes. \Cref{fig:MH1} shows an example a topology of a multi-hop network.   Each of the $K$ packet classes have an fixed source node and destination node. Each node maintains $K$ queues, one for each class of traffic. At each time-step, the control policy observes the network state $\mathbf s_t = (\mathbf q_t, \mathbf y_t)$, and selects an action $\mathbf a_t = \{a_{m,k,t}\}_{\forall m,k}$ where  where $a_{m,k,t}\geq 0$ is the number of class $k$ packets to be transmitted on link $m$ in time-step $t$. This action $\mathbf a_t$ must satisfy the following constraint: 
\begin{align}
    \sum_{k} a_{m,k,t} &\leq y_{m,t}, \quad \forall m = 1,..., M \label{eq: link_constraint}
\end{align}
This link-capacity constraint means the total number of packets transmitted over each link must be less than the total capacity of the link. The central controller's decision $\mathbf a_t$ encompasses both a routing and scheduling decision.  It determines not only the path each packet takes but also the order at which each class of traffic is transmitted over each link in every time step.  Link activation constraints may also be included to model interference in wireless multi-hop networks, but we do not add this constraint for the experiments in \Cref{sec: experiments}

\subsection{Markov Decision Process}
Each SQN control tasks can be formulated as an average-cost Markov Decision Process (MDP) defined by the tuple $(\mathcal S, \mathcal A, c, P, \rho_0)$ where:
\begin{enumerate}
    \item $\mathcal S$ represents the state-space, comprised of all possible states $\mathbf s=(\mathbf q, \mathbf y)$.  As its assumed each buffer within the network is unbounded, the state-space $\mathcal S$ is also unbounded. 
    \item $\mathcal A$ denotes the action-space, comprised is the set of feasible control decisions which depends on the task. Additionally, we assume there is a set of valid actions $\mathcal A(\mathbf s)$ for each $\mathbf s\in \mathcal S$ that is known by the central controller for each SQN control task. 
    \item $P(\mathbf s'|\mathbf s, \mathbf a)$ is the probability of transitioning to state $\mathbf s'$ from state $\mathbf s$ after applying action $\mathbf a$. This transition probability captures the inherent uncertainty stemming from stochastic packet arrivals and stochastic link states.    
    \item  $c(\mathbf s_t)$ is the cost function. For delay minimization tasks, this equates to  $c(\mathbf s_t)=\bar q_t$.   
    \item $\rho_0(\mathbf s)=P(\mathbf s_0=\mathbf s)$ is the initial state distribution.  For \Cref{sec: experiments}, we assume that at the beginning of each task, all queues are empty and each link state is sampled randomly from its respective distribution.  However, none of our theoretical results depend on this assumption. 
\end{enumerate}

For an SQN control task, the central controller takes actions according to its policy $\pi$. We assume $\pi$ is stochastic and $\pi(\mathbf a|\mathbf s)$ denotes the probability of taking action $\mathbf a$ in state $\mathbf s$.  We use $\pi(\cdot|\mathbf s)$ to denote the distribution over all valid actions $\mathcal A(\mathbf s)$ in state $\mathbf s$.  In the ODRLC setting, $\pi$ generates a single long trajectory $\tau=(\mathbf s_0, \mathbf a_0,  c_0, \mathbf s_1, \mathbf a_1, ...)$ where $\mathbf s_0\sim \rho_0$, $\mathbf a_t\sim \pi(\cdot|\mathbf s)$, $c_t=c(\mathbf s_t)$, and $\mathbf s_{t+1}\sim P(\cdot|\mathbf s_t, \mathbf a_t)$. Unlike the traditional offline simulation setting, the state cannot be externally reset. However, the policy $\pi$ can be updated at fixed intervals of length $T_e$.   The aim is to learn a policy $\pi$ to solve the following average-cost minimization problem:
\begin{align} \label{eq:eta_pi}
    \min_{\pi\in \Pi} \eta(\pi) &=  \lim_{T\rightarrow \infty} \frac{1}{T}\sum_{t=0}^{\infty} \mathbb E_{\pi}[c(\mathbf s_t)] 
\end{align}
where $\mathbb E_{\pi}[\cdot]$ denotes the expectation under policy $\pi$. The policy space $\Pi$ denotes the set of all valid policies. We restrict  $\Pi$ to only include stationary Markovian polices. This means each $\pi\in \Pi$ makes decisions solely based on the current state, $\mathbf s_t$, and is independent of the time step $t$.

Any stationary Markov policy $\pi$ induces a Markov chain over the states with a state-transition distribution $P_\pi(\mathbf s'|\mathbf s)$.  When the state Markov Chain is positive recurrent we have the equivalence  $\eta(\pi)=\mathbb E_{\mathbf s\sim d(\pi)}[c(\mathbf s)]$ where  $d(\pi)$ is the steady-state distribution of the Markov chain induced by $\pi$. Note that for $\eta(\pi)$ to be finite, the state Markov chain must be positive recurrent.  When $\eta(\pi)$ is finite, the following value functions are well defined: 
\begin{align}
    V^\pi(\mathbf s) &= \mathbb{E}_\pi\left[\sum_{t=0}^{\infty}c(\mathbf s_t)-\eta(\pi)|\mathbf s_0= \mathbf  s\right] \\
    Q^\pi(\mathbf s,\mathbf a) &= \mathbb{E}_\pi\left[\sum_{t=0}^{\infty}c(\mathbf s_t)-\eta(\pi)|\mathbf s_0= \mathbf s, \mathbf a_0=\mathbf a\right] \\
   A^\pi(\mathbf s,\mathbf a) &= Q^\pi(\mathbf s,\mathbf a)-V^\pi(\mathbf s)
\end{align} 

\subsection{Lyapunov Stability}
Each queue backlog $q_{i,k}(t)$ must remain finite over any trajectory $\tau\sim \pi$, in order for the limit in \cref{eq:eta_pi} to be finite.  This requirement is strongly related to the following notion of stability:

\begin{defn}[Strong Stability \cite{neely2010}]
A discrete time process $\{q_t\}$ is strongly stable under transition function $P$ if for any initial state $q_0$ the following condition is satisfied:
\begin{equation}
\limsup_{T\rightarrow \infty}\frac{1}{T}\sum_{t=0}^{T-1}\mathbb E_{P}[|q_t|] < \infty    
\end{equation}
\end{defn}

If the queue state Markov chain $\{\mathbf q_t\}$ under $P_\pi$ is strongly stable for each queue in the network, the corresponding policy $\pi$ is called a strongly-stable policy. Strong stability is an important property for queuing networks that ensures the state Markov chain $\{\mathbf q_t\}$ is positive recurrent with unique steady-state distribution $d(\pi)$ that is independent of the initial state.  Additionally, strong-stability implies that the number of packets in each buffer remains finite which is essential in the ODRLC setting as it ensures finite packet delay throughout the learning process.   

Lyapunov Optimization is a technique for ensuring stability of dynamical systems through the use of Lyapunov functions. A Lyapunov function  $\Phi:\mathcal S\mapsto \mathbb R^+$ maps state vectors to non-negative scalars which quantify the ``energy" of each state.  Specifically, for SQNs, 
 $\Phi(\mathbf s_t)$ is typically defined to grow large as the queue sizes grow  large. Stability is achieved by taking actions that cause the Lyapunov drift defined as  $\mathbb E_\pi[\Delta(\mathbf s_t)] = \underset{\mathbf s_{t+1}\sim P_\pi(\cdot|\mathbf s)}{\mathbb E}[\Phi(\mathbf s_{t+1}) - \Phi(\mathbf s_t)|\mathbf s_t]$ to be negative when queue sizes grow too large.  For the SQN models considered in this work, the Lyapunov function is solely a function of the queue state $\Phi(\mathbf s_t) = \Phi(\mathbf q_t)$ as the link-states are not influenced by control decisions.  The following Lyapunov drift condition can be used to guarantee stability properties of classical network control algorithms:

\begin{thm} \label{thm: drift_stability}
The policy $\pi$ is strongly-stable if there exists a Lyapunov function $\Phi:\mathcal S\mapsto [0,\infty]$, a finite region of the state space $\mathcal S_1\subset \mathcal S$ and a finite constant $B$ such that:
\begin{align*}
\underset{\mathbf s_{t+1}\sim P_\pi(\cdot|\mathbf s_t)}{\mathbb E}[\Phi(\mathbf s_{t+1}) - \Phi(\mathbf s_t)|\mathbf s_t] \leq -(1+\bar q_t) + B \mathbf 1_{\mathcal S_{1}}(\mathbf s_t) \; \forall \mathbf s_t\in \mathcal S
\end{align*}
where, $$\mathbf 1_{\mathcal S}(\mathbf s_t)= \begin{cases}1, & \mathbf s_t\in \mathcal S_1, \\ 0, & \text{otherwise} \end{cases}$$
\end{thm}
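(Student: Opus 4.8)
The plan is to use the standard Lyapunov drift (telescoping) argument. First I would take the unconditional expectation of the given per-step drift inequality via the tower property, writing
$\mathbb{E}[\Phi(\mathbf s_{t+1})] - \mathbb{E}[\Phi(\mathbf s_t)] \le -1 - \mathbb{E}[\bar q_t] + B\,\mathbb{E}[\mathbf{1}_{\mathcal{S}_1}(\mathbf s_t)]$.
Since the indicator never exceeds $1$ and $B$ is finite, I would upper-bound the last term by $B$, giving
$\mathbb{E}[\Phi(\mathbf s_{t+1})] - \mathbb{E}[\Phi(\mathbf s_t)] \le -1 - \mathbb{E}[\bar q_t] + B$.

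Next I would telescope this inequality over $t = 0, \dots, T-1$. The left-hand side collapses to $\mathbb{E}[\Phi(\mathbf s_T)] - \mathbb{E}[\Phi(\mathbf s_0)]$, yielding
$\mathbb{E}[\Phi(\mathbf s_T)] - \mathbb{E}[\Phi(\mathbf s_0)] \le -T - \sum_{t=0}^{T-1}\mathbb{E}[\bar q_t] + BT$.
Rearranging and using $\Phi \ge 0$ to discard the nonnegative term $\mathbb{E}[\Phi(\mathbf s_T)]$, I obtain
$\sum_{t=0}^{T-1}\mathbb{E}[\bar q_t] \le \mathbb{E}[\Phi(\mathbf s_0)] + (B-1)T$.
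Dividing by $T$ and taking the limit superior as $T \to \infty$ makes the $\mathbb{E}[\Phi(\mathbf s_0)]/T$ term vanish, since the initial state is fixed so $\Phi(\mathbf s_0)$ is a finite constant, leaving
$\limsup_{T\to\infty}\tfrac{1}{T}\sum_{t=0}^{T-1}\mathbb{E}[\bar q_t] \le B - 1 < \infty$.

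Finally, since each queue backlog is nonnegative, $q_{i,t} \le \bar q_t$ for every $i$, so the same finite bound holds for each individual queue process. This establishes that every queue, and hence the policy $\pi$, is strongly stable in the sense of the preceding definition.

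The main obstacle I anticipate is the technical justification that the intermediate expectations $\mathbb{E}[\Phi(\mathbf s_t)]$ are finite at every step, which is required to make the telescoping and rearrangement legitimate rather than an indeterminate $\infty - \infty$. I would handle this by induction: given $\Phi(\mathbf s_0) < \infty$, the drift bound gives $\mathbb{E}[\Phi(\mathbf s_{t+1})] \le \mathbb{E}[\Phi(\mathbf s_t)] + B$, so each $\mathbb{E}[\Phi(\mathbf s_t)]$ is finite. With that in place, the remaining steps, namely bounding the indicator, telescoping, and relating $\bar q_t$ to the individual queues, are routine.
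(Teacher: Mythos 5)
Your proof is correct, but it takes a genuinely different route from the paper, which in fact offers no proof at all: immediately after the statement, the paper simply identifies the theorem as a modified version of the (V3) Lyapunov drift condition with $f = 1+\bar q_t$ and cites Meyn and Tweedie (Ch.~14). Your argument is the standard, self-contained Foster--Lyapunov telescoping computation: take total expectations by the tower property, bound the indicator term by $B$, telescope over $t=0,\dots,T-1$, discard $\mathbb E[\Phi(\mathbf s_T)]\geq 0$, and divide by $T$ to get $\limsup_{T\to\infty}\frac{1}{T}\sum_{t=0}^{T-1}\mathbb E[\bar q_t]\leq B-1$, a bound independent of the initial state, which together with $0\leq q_{i,t}\leq \bar q_t$ is exactly the paper's definition of strong stability; your induction $\mathbb E[\Phi(\mathbf s_{t+1})]\leq \mathbb E[\Phi(\mathbf s_t)]+B$ correctly forecloses the $\infty-\infty$ issue in the rearrangement. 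The comparison is worth noting: your route is more elementary and requires no chain-theoretic hypotheses (no irreducibility or aperiodicity), but it proves only the time-average bound. The (V3) citation route buys strictly more --- positive Harris recurrence, existence and uniqueness of the stationary distribution $d(\pi)$, and $f$-norm ergodicity --- and the paper actually leans on those stronger consequences elsewhere (e.g., the equivalence $\eta(\pi)=\mathbb E_{\mathbf s\sim d(\pi)}[c(\mathbf s)]$ and the well-definedness of $V^\pi$, $Q^\pi$, and policy-gradient quantities), so your proof covers the stated theorem but not everything the paper extracts from the cited machinery. One small caveat: the statement allows $\Phi:\mathcal S\mapsto[0,\infty]$, so your argument implicitly needs $\Phi(\mathbf s_0)<\infty$ (harmless here, since the Lyapunov functions used for SQNs are finite-valued, but worth stating explicitly).
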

This theorem is a modified version of the (V3) Lyapunov drift condition with $f=1+\bar q_t$ \cite[Ch. 14]{meyn2009}.  Examples of strongly-stable policies include the MaxWeight scheduling and Backpressure routing policies. \cite{meyn2009, tassiulas1992}. MaxWeight assigns weights ($q_{i,t}\times y_{i,t}$) to each node-link pair, and activates the link with the largest weight in each time step.  Backpressure dynamically routes traffic based on congestion gradients without prespecified paths in Multihop networks.  Both aim to to minimize bounds on the expected Lyapunov drift at each time step.  While the throughput benefits of these algorithms are well established, they may suffer from poor delay performance as shown  \Cref{sec: experiments}.
\subsection{Policy Gradient Methods}
In this work we focus on policy gradient methods, a class of DRL algorithms designed to directly optimize an agent's policy based on trajectories generated by the policy itself. We assume a NN is used to represent the policy for a particular task.  We refer to these policy-NNs as actor networks.  We denote a policy as $\pi_\theta$, where $\theta$ represents the weights of the actor network.   For parametric policies, the minimum cost objective can be expressed as $\min_{\theta\in \Theta} \eta(\pi_\theta)$.  This minimization is over the possible policy parameters $\Theta$, where $\Theta$ is determined by the actor network's architecture.   
Policy gradient methods perform this minimization iteratively: first estimating the gradient of $\eta(\pi_\theta)$ with respect to the actor network's parameters, $\theta$, and then performing gradient descent.  The analytical form of the gradient $\nabla_\theta \eta(\pi_\theta)$ is provided by the classical policy gradient theorem \cite{sutton1999}:
\begin{align}
    \nabla_\theta \eta(\pi_{\theta}) = \mathbb{E}_{\substack{\mathbf s\sim d({\pi_{\theta}})\\\mathbf a\sim \pi_{\theta}(\cdot|\mathbf{s})}}\left[Q^{\pi_{\theta}}(\mathbf s,\mathbf a)\nabla_\theta \log \pi_{\theta}(\mathbf a|\mathbf s)\right] \label{eq: pgrad}
\end{align}
where $d({\pi_{\theta}})$ is the stationary-distribution of the Markov chain induced by the policy $\pi_{\theta}$.  
Implementations that utilize automatic differentiation software work by constructing a loss function whose gradient approximates the analytical gradient:
\begin{equation} \label{eq: og_pg}
    L_{PG}(\pi_\theta) = \frac{1}{T}\sum_{t=0}^{T-1} \hat A^{\pi_\theta} (\mathbf s_t, \mathbf a_t)\log\pi_\theta(\mathbf a_t|\mathbf s_t)
\end{equation}
where $\hat A^{\pi_\theta}(\mathbf s_t,\mathbf a_t)$ is an estimate of the advantage function with respect to the policy $\pi_\theta$.  Utilizing the advantage function estimate is often preferred to an estimate of the state-action value $\hat Q^{\pi_\theta}(\mathbf s_t, \mathbf a_t)$ as it offers a lower-variance estimate of the gradient \cite{greensmith2004}.  A significant limitation of policy gradient algorithms is their one-time use of each state-action pair  $(\mathbf s_t,\mathbf a_t)\in \tau$.  Re-using trajectories for multiple gradient updates often leads to destructively large policy updates which cause ``performance collapse" \cite{schulman2017a}.

Trust region methods, a subset of policy gradient methods, are designed to address the challenge of making the largest possible steps towards performance improvements upon each update to the policy without risking performance collapse. The theoretical foundation for these methods are bounds on the relative performance  $\delta(\pi_\theta',\pi_\theta)=\eta(\pi_\theta')-\eta(\pi_\theta)$ between two policies:
\begin{align}
    &\delta(\pi_\theta',\pi_\theta) \leq \mathbb{E}_{\substack{\mathbf s\sim d({\pi_{\theta})}\\\mathbf a\sim \pi_{\theta}(\cdot|\mathbf s)}}\left[\frac{\pi_\theta'(\mathbf a|\mathbf s)}{\pi_\theta(\mathbf a|\mathbf s)}A^{\pi_{\theta}}(\mathbf s,\mathbf a)\right] + D(\pi_{\theta}', \pi_\theta) \label{eqn: improvbound}
\end{align}
where $D(\pi_{\theta}', \pi_\theta)$ is proportional to a measure of dissimilarity between the policies $\pi_\theta'$ and $\pi_\theta$. Practical algorithms define surrogate objectives that such that minimizing the surrogate objective corresponds to minimizing this upper bound. These surrogate objectives are typically defined as to minimize the first term while constraining the KL-divergence between $\pi_\theta'$ and $\pi_\theta$.  However, like policy-gradient methods, this optimization is performed via gradient updates on the policy parameters $\theta$, and thus effective algorithms must determine a suitable step-size to ensure $\delta(\pi_\theta',\pi_\theta)<0$.   Trust-region Policy Optimization (TRPO) approaches this by framing the problem as a constrained minimization, solved approximately via conjugate gradient methods \cite{schulman2017}. Proximal Policy Optimization (PPO), conversely, employs a clipped surrogate objective to deter substantial policy shifts, optimizing the bound while limiting large changes between successive policies \cite{schulman2017a}.

\section{Intervention-Assisted Policy Framework}
This section introduces the intervention-assisted policy framework for online training of stochastic queuing network control agents. This framework addresses two critical questions: (1) how to design a policy that guarantees strong stability in the ODRLC setting? (2) how to update this policy based on the online interactions with the SQN environment? The following assumption is required for this framework: 
\noindent \begin{ass} \label{ass: pi_0}
The agent has access to a known strongly stable policy $\pi_0$  
\end{ass}
\noindent This assumption is not restrictive for SQN controls. Classical SQN control algorithms such as the MaxWeight  or Backpressure policies can serve as $\pi_0$ for single-hop and multi-hop problems respectively \cite{meyn2009}. Its crucial to recognize that strong stability does not imply optimality, where the optimal policy is defined as $\pi^* = \argmin_{\pi}(\eta(\pi))$.  We restrict our attention to SQN environments that may be stabilized. In which case, the optimal policy $\pi^*$ is strongly stable.  

\subsection{Intervention Assisted Policy}
The intervention-assisted framework is based on the partitioning the state-space $\mathcal S$ into two disjoint regions: a bounded ``learning region" $\mathcal S_\theta$ and and unbounded ``intervention region" $\mathcal S_0$. When the current state $\mathbf s_t$ falls within the learning region $\mathcal S_\theta$, the agent samples an action $\mathbf a_t$ from the actor policy $\pi_\theta$. Conversely, if $\mathbf s_t\in \mathcal S_0$, the agent samples an action $\mathbf a_t$ from the known strongly stable policy $\pi_0$. Section \ref{sec: lr_selection} details a practical method of choosing this partitioning to ensure sample efficient learning. The intervention-assisted policy $\pi_I$ is formulated as follows: 
\begin{align} \label{eq: iapolicy}
&\pi_{I}(\cdot|\mathbf s) = I(\mathbf s)\pi_0(\cdot|\mathbf s) + (1-I(\mathbf s))\pi_\theta(\cdot|\mathbf s) \\ \intertext{where}
&I(\mathbf s) = \begin{cases}
    1, & \mathbf s\in \mathcal S_0,\\
    0, & \mathbf s\in \mathcal S_\theta
\end{cases}
\end{align}
We leverage on-policy policy gradient methods where the intervention policy $\pi_I$ is used by the agent to generate a trajectory $\tau\sim \pi_I$.  Each trajectory is a sequence of states, intervention indicators, actions, and costs: $\tau = (\mathbf s_0, I_0, \mathbf a_0, c_0, \mathbf s_1, ...)$ where $I_t\in\{0,1\}$ indicates if an intervention occurred at time-step $t$.  

\subsubsection{Guaranteeing Stability of Intervention-Assisted Policies}  
This section details how the intervention-assisted policy $\pi_I$ ensures strong stability using a Lyapunov optimization framework. Strong stability is vital from an SQN control perspective as it ensure packet delay is finite which is necessary for any policy deployed on a real-network.  From a learning perspective, strong-stability guarantees a steady-state distribution $d(\pi_I)$, which is is necessary for well-defined policy gradient  updates. The proofs of stability for intervention-assisted policies rely on the following lemma:

\begin{lem} \label{lem: finite_drift} Under the assumption that all arrivals are finite, if the Lyapunov function $\Phi(\cdot)$ is bounded for each $\mathbf s\in \mathcal S$,  then there exists a constant $B_i> 0$ for any bounded subset $\mathcal S_i\subset \mathcal S$,  such that
\begin{equation}
    \mathbb E_\pi[\Phi(\mathbf s_{t+1})-\Phi(\mathbf s_t)|\mathbf s_t\in \mathcal S_i] < B_i 
\end{equation}
for any policy $\pi$.  
\end{lem}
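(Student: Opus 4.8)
The plan is to exploit the fact that in a single time step the queue backlogs can grow by at most the total number of arriving packets, which is finite by hypothesis. First I would observe that since $\mathcal S_i$ is a bounded subset of the state space, every state $\mathbf s_t = (\mathbf q_t, \mathbf y_t) \in \mathcal S_i$ has queue backlogs bounded by some finite $q_{\max}$. Because each arrival $x_{k,t}$ is drawn from a finite discrete distribution, there is a finite $x_{\max}$ with $\sum_k x_{k,t} \le x_{\max}$ almost surely, and since departures are non-negative, the queue update gives $q_{i,t+1} \le q_{i,t} + x_{\max}$ for every queue $i$. Together with the fact that the link states $\mathbf y_{t+1}$ are drawn from finite-support distributions, this shows that the set of states reachable from $\mathcal S_i$ in one step, call it $\mathcal S_i^{+}$, is bounded; moreover, since queue backlogs are integer-valued and link states have finite support, $\mathcal S_i^{+}$ is in fact a \emph{finite} set. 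Crucially, $\mathcal S_i^{+}$ depends only on $\mathcal S_i$ and the arrival and link-state distributions, not on the policy, because any feasible action can only remove packets and the growth bound $x_{\max}$ holds regardless of which action is taken.

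Next I would use the pointwise finiteness of $\Phi$. By assumption $\Phi(\mathbf s) < \infty$ for each individual $\mathbf s$, and since $\mathcal S_i^{+}$ is finite, the maximum $\Phi_{\max} := \max_{\mathbf s' \in \mathcal S_i^{+}} \Phi(\mathbf s')$ is attained and finite. Because $\Phi$ is non-negative, for any $\mathbf s_t \in \mathcal S_i$ and any realization of the next state $\mathbf s_{t+1} \in \mathcal S_i^{+}$ we obtain the deterministic (almost sure) one-step bound $\Phi(\mathbf s_{t+1}) - \Phi(\mathbf s_t) \le \Phi_{\max} - 0 = \Phi_{\max}$, valid for every action and hence under any policy $\pi$.

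Finally, taking the conditional expectation $\mathbb E_\pi[\,\cdot\mid \mathbf s_t \in \mathcal S_i\,]$ of this pointwise bound preserves it by monotonicity of expectation, so setting $B_i = \Phi_{\max} + 1$ yields the strict inequality claimed, uniformly over all policies $\pi$.

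The main obstacle is the step establishing that the one-step reachable set $\mathcal S_i^{+}$ is finite and policy-independent, which is where both hypotheses are genuinely needed. Boundedness of $\mathcal S_i$ alone is not enough without the finite-arrival assumption: otherwise queues could jump to arbitrarily large values in one step and $\Phi(\mathbf s_{t+1})$ could be unbounded despite $\Phi$ being finite at every individual state. Likewise, pointwise finiteness of $\Phi$ translates into a uniform bound only because the reachable set is finite rather than merely bounded. Everything after that is routine.
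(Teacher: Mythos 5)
Your argument follows essentially the same route as the paper -- and in fact goes further, since the paper offers no formal proof of this lemma, only the remark that the worst-case conditional drift is achieved by idling and that bounded arrivals therefore bound the drift. Your reachable-set formalization (bounded $\mathcal S_i$ plus finite-support randomness gives a \emph{finite} one-step reachable set $\mathcal S_i^{+}$, on which pointwise finiteness of $\Phi$ upgrades to a uniform bound, with non-negativity of $\Phi$ handling the $-\Phi(\mathbf s_t)$ term) is a correct and careful rendering of exactly that intuition, and your observation that finiteness rather than mere boundedness of $\mathcal S_i^{+}$ is what lets pointwise finiteness of $\Phi$ become a uniform bound is the right thing to isolate.

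One step needs repair, however. The inequality $q_{i,t+1} \le q_{i,t} + x_{\max}$, justified by ``any feasible action can only remove packets,'' is false for the multi-hop networks the paper considers: there, an action $\mathbf a_t = \{a_{m,k,t}\}$ moves packets between queues, so a queue also receives \emph{endogenous} arrivals forwarded from upstream nodes, not just exogenous ones. (For the same reason, the paper's own claim that idling maximizes the drift is not literally true for a general non-negative $\Phi$; e.g., with $\Phi(\mathbf q)=\sum_i q_i^2$, forwarding a packet into an already-long queue can increase $\Phi$ more than idling would.) The fix is one line and preserves policy-independence: by the constraint $\sum_k a_{m,k,t} \le y_{m,t}$, the endogenous inflow to any queue is bounded by the sum of the capacities of its incoming links, each with finite support, so $q_{i,t+1} \le q_{i,t} + x_{\max} + M y_{\max}$ uniformly over all feasible actions. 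With this corrected growth bound, $\mathcal S_i^{+}$ is still finite and independent of $\pi$, and the remainder of your argument goes through unchanged.
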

\Cref{lem: finite_drift} means that given a finite region $\mathcal S_i\subset \mathcal S$, the maximum conditional drift is bounded above.  Intuitively this is true because the max conditional drift is achieved by idling for any $\mathbf s_t$, and since arrivals are bounded, the max conditional drift is bounded. 

The following theorem provides details how to ensure an intervention-assisted policy $\pi_I$ is strongly stable.  
\begin{thm}\label{thm: pi_i}
Let $\mathcal S_\theta$ denote the learning region and $\mathcal S_0=\mathcal S\setminus \mathcal S_\theta$ denote the intervention region for an intervention assisted policy $\pi_I$.  If $\mathcal S_\theta$ is finite and $\pi_0$ satisfies \Cref{thm: drift_stability} for some $\Phi(\cdot)$, $B$, and $\mathcal S_1$, then the following Lyapunov drift condition is satisfied:
\begin{equation} \label{eq: drift_i}
    \mathbb E_{P_{\pi_I}}[\Phi(\mathbf s_{t+1})-\Phi(\mathbf s_t)|\mathbf s_t] \leq -(1+\bar q_t) + B_{1,\theta} \mathbf 1_{\mathcal S_{1,\theta}}(\mathbf s_t) \; \forall \mathbf s_t\in \mathcal S 
\end{equation}
for a constant $B_{1,\theta}<\infty$ and the region $\mathcal S_{1,\theta}=\mathcal S_1 \cup \mathcal S_\theta$ where $\cup$ denotes the union between two sets.     
\end{thm}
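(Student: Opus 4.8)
The plan is to verify the drift inequality \eqref{eq: drift_i} pointwise in $\mathbf s_t$, partitioning $\mathcal S$ into the two regions on which $\pi_I$ reduces to a different base policy. By the definition in \eqref{eq: iapolicy}, $I(\mathbf s_t)=1$ and hence $\pi_I(\cdot|\mathbf s_t)=\pi_0(\cdot|\mathbf s_t)$ for every $\mathbf s_t\in\mathcal S_0$, while $I(\mathbf s_t)=0$ and hence $\pi_I(\cdot|\mathbf s_t)=\pi_\theta(\cdot|\mathbf s_t)$ for every $\mathbf s_t\in\mathcal S_\theta$. This lets me bound the conditional drift on each region separately and then merge the two bounds under a single finite constant $B_{1,\theta}$.

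First I would handle the intervention region $\mathcal S_0$. There the one-step drift of $\pi_I$ equals that of $\pi_0$, so the hypothesis that $\pi_0$ satisfies \Cref{thm: drift_stability} gives $\mathbb E_{P_{\pi_I}}[\Phi(\mathbf s_{t+1})-\Phi(\mathbf s_t)|\mathbf s_t]\leq -(1+\bar q_t)+B\,\mathbf 1_{\mathcal S_1}(\mathbf s_t)$ for all $\mathbf s_t\in\mathcal S_0$. I would then compare this with the target bound using $\mathcal S_{1,\theta}=\mathcal S_1\cup\mathcal S_\theta$: if $\mathbf s_t\in\mathcal S_0\cap\mathcal S_1$ both indicators equal one, so the bound holds provided $B_{1,\theta}\geq B$; if $\mathbf s_t\in\mathcal S_0\setminus\mathcal S_1$ then, since also $\mathbf s_t\notin\mathcal S_\theta$, we get $\mathbf 1_{\mathcal S_{1,\theta}}(\mathbf s_t)=0$ and both sides reduce to $-(1+\bar q_t)$. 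Thus \eqref{eq: drift_i} holds on all of $\mathcal S_0$ as soon as $B_{1,\theta}\geq B$.

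Next I would handle the learning region $\mathcal S_\theta$, where $\pi_I$ follows the neural policy $\pi_\theta$ whose drift cannot be controlled directly. The key observation is that $\mathcal S_\theta$ is finite, so \Cref{lem: finite_drift} supplies a finite constant $B_\theta$ that bounds the conditional drift under $\pi_\theta$ uniformly over $\mathbf s_t\in\mathcal S_\theta$; moreover the backlog is bounded on this finite set, so $\bar q^{\max}_\theta=\max_{\mathbf s_t\in\mathcal S_\theta}\bar q_t<\infty$. For $\mathbf s_t\in\mathcal S_\theta\subseteq\mathcal S_{1,\theta}$ the target bound is $-(1+\bar q_t)+B_{1,\theta}$, so it suffices that $B_\theta\leq -(1+\bar q_t)+B_{1,\theta}$, i.e. $B_{1,\theta}\geq B_\theta+1+\bar q^{\max}_\theta$.

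Finally I would set $B_{1,\theta}=\max\{B,\,B_\theta+1+\bar q^{\max}_\theta\}<\infty$, which simultaneously satisfies the requirements of both cases and therefore establishes \eqref{eq: drift_i} for every $\mathbf s_t\in\mathcal S$. I expect the learning-region step to be the main obstacle: because $\pi_\theta$ carries no stability guarantee, the argument must avoid any appeal to a drift property of $\pi_\theta$ and instead absorb its possibly positive drift into the additive constant. This is legitimate precisely because the finiteness of $\mathcal S_\theta$ confines the uncontrolled behavior to a bounded set, allowing \Cref{lem: finite_drift} and the bound $\bar q^{\max}_\theta$ to apply; the remaining work is the routine bookkeeping that the indicator regions stay consistent across the two cases.
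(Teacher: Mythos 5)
Your proof is correct and takes essentially the same route as the paper's: outside $\mathcal S_1\cup\mathcal S_\theta$ the drift condition of $\pi_0$ from \Cref{thm: drift_stability} yields pure negative drift, while on the finite remainder \Cref{lem: finite_drift} together with the boundedness of $\bar q_t$ on a finite set absorbs the uncontrolled drift into a finite additive constant. The only cosmetic difference is that you partition by policy region (splitting $\mathcal S_0$ on $\mathcal S_1$) and apply \Cref{lem: finite_drift} only on $\mathcal S_\theta$, whereas the paper applies it to all of $\mathcal S_{1,\theta}$ at once; your bookkeeping makes the constant $B_{1,\theta}=\max\{B,\,B_\theta+1+\bar q^{\max}_\theta\}$ explicit, which the paper leaves implicit.
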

\begin{proof}[Proof of \Cref{thm: pi_i}]
The state space $\mathcal S$ can be dived into two  disjoint regions: $\mathcal S_1\cup \mathcal S_\theta=\mathcal S_{1,\theta}$, and $\mathcal S\setminus \mathcal S_{1,\theta} = \mathcal S_{1,\theta}^{-}$ . For any  $\mathbf s_t \in \mathcal S^{-}_{1,\theta}$, $I(\mathbf s_t)=1$  and the intervention policy $\pi_0$ is used.  Since $\pi_0$ is strongly stable, by \Cref{thm: drift_stability}:  \begin{equation} \label{eq: p1}
    \mathbb E_{P_0}[\Phi(\mathbf s_{t+1})-\Phi(\mathbf s_t)|\mathbf s_t\in \mathcal S_{1,\theta}^- ] \leq -(1+\bar q_t)
\end{equation}
The region $\mathcal S_{1,\theta}$ is finite thus by \cref{lem: finite_drift} there exists $B'_{1,\theta}<\infty$ such that:
\begin{equation}
    \mathbb E_{P_I}[\Phi(\mathbf s_{t+1})-\Phi(\mathbf s_t)|\mathbf s_t \in \mathcal S_{1,\theta}] < B'_{1,\theta}
\end{equation}
Finally $(1+\bar q_t)<\infty \; \forall \mathbf s_t\in \mathcal S$, thus there exists $B_{1,\theta}<\infty$ such that:

\begin{equation} \label{eq: p2}
    \mathbb E_{P_I}[\Phi(\mathbf s_{t+1})-\Phi(\mathbf s_t)|\mathbf s_t \in \mathcal S_{1,\theta}] < -(1+\bar q_t)+B_{1,\theta}
\end{equation}
\Cref{eq: drift_i} follows from combining \cref{eq: p1} and \cref{eq: p2}. 
\end{proof}
\begin{cor}
If the conditions of \Cref{thm: pi_i} are satisfied, the intervention-assisted policy $\pi_I$ is strongly-stable.
\end{cor}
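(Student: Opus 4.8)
The plan is to recognize that this Corollary is an immediate consequence of \Cref{thm: pi_i} combined with \Cref{thm: drift_stability}. The strategy is to observe that \Cref{thm: pi_i} has already produced a drift inequality for $\pi_I$ whose form matches \emph{precisely} the hypothesis of \Cref{thm: drift_stability}, so the entire argument reduces to verifying that the three ingredients demanded by \Cref{thm: drift_stability}---a valid Lyapunov function, a \emph{finite} exceptional region, and a \emph{finite} additive constant---are all in place for the chain induced by $\pi_I$.

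First I would note that the Lyapunov function $\Phi:\mathcal S\mapsto [0,\infty]$ used throughout \Cref{thm: pi_i} is inherited directly from the drift condition satisfied by $\pi_0$, so it remains a legitimate Lyapunov function for the combined policy. Next I would check the finiteness of the exceptional region $\mathcal S_{1,\theta}=\mathcal S_1\cup \mathcal S_\theta$: the set $\mathcal S_1$ is finite by the hypothesis that $\pi_0$ satisfies \Cref{thm: drift_stability}, and the learning region $\mathcal S_\theta$ is finite by the hypothesis of \Cref{thm: pi_i}, so their union is finite. Finally I would invoke the constant $B_{1,\theta}<\infty$, whose existence is guaranteed by the conclusion of \Cref{thm: pi_i}.

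With these three facts in hand, the drift inequality \cref{eq: drift_i} is exactly of the form required by \Cref{thm: drift_stability}, under the identifications $B\mapsto B_{1,\theta}$ and $\mathcal S_1\mapsto \mathcal S_{1,\theta}$. Applying \Cref{thm: drift_stability} to the Markov chain $\{\mathbf q_t\}$ induced by $\pi_I$ under the transition kernel $P_{\pi_I}$ then yields strong stability of $\pi_I$, which is the claim. There is no genuine obstacle in this argument; the only point requiring care is confirming that both the exceptional region and the additive constant appearing in \cref{eq: drift_i} are finite, which is precisely what \Cref{thm: pi_i} was constructed to deliver.
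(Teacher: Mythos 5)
Your proof is correct and takes exactly the route the paper intends: the drift inequality \cref{eq: drift_i} produced by \Cref{thm: pi_i}, together with the finiteness of $\mathcal S_{1,\theta}=\mathcal S_1\cup\mathcal S_\theta$ and of $B_{1,\theta}$, matches the hypothesis of \Cref{thm: drift_stability} verbatim, so strong stability of $\pi_I$ follows immediately. Your care in verifying the three ingredients (Lyapunov function, finite exceptional region, finite constant) is precisely the substance of the corollary, which the paper leaves implicit.
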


Thus to achieve strong-stability given a strongly stable $\pi_0$, we only need to ensure that the learning region $\mathcal S_\theta$ is finite.  These conditions also mitigate the extrapolation burden on the actor network $\pi_\theta$. Since the possible state inputs into $\pi_\theta$ is confined to the finite region $\mathcal S_\theta$. Additionally, the conditions required for \Cref{thm: pi_i} are also independent of the actor policy $\pi_\theta$, meaning the intervention-assisted policies are agnostic to the parameterizations or the architecture used for the actor policy.  Now that the first question of stability has been addressed, the next section addresses the question of how to perform policy updates on an intervention-assisted policy. 

\subsection{Intervention Assisted Policy Gradients}
This section extends the classical policy gradient theorem \cite{sutton1999} to derive the analytical form of the gradient of the intervention-assisted performance objective $\nabla_\theta \eta(\pi_I)$.

\begin{thm}
    Given a strongly-stable intervention-assisted policy $\pi_I(\cdot|\mathbf s)=I(\mathbf s)\pi_0(\cdot|\mathbf s) + (1-I(\mathbf s))\pi_\theta(\cdot|\mathbf s)$, and average-cost objective $\eta(\pi_I)$, the policy gradient is:
    \begin{equation}
        \nabla_\theta \eta(\pi_I) = \underset{\substack{\mathbf s\sim d({\pi_I}) \\ \mathbf a\sim \pi_I(\cdot|\mathbf s)}}{\mathbb{E}}[(1-I(\mathbf s)Q^{\pi_I}(\mathbf s,\mathbf a)\nabla_\theta \log \pi_\theta(\mathbf a|\mathbf s)]\label{eq:iapg}
    \end{equation}
    where $d({\pi_I})$ is the steady-state distribution induced by $\pi_I$, and $Q^{\pi_I}$ is the state-action value function with respect to policy $\pi_I$. 
\end{thm}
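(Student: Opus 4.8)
The plan is to mirror the classical average-cost policy gradient derivation of \cite{sutton1999}, exploiting the fact that $\theta$ enters $\pi_I$ only through $\pi_\theta$ and only on the learning region $\mathcal S_\theta$. First I would invoke the strong stability of $\pi_I$ (guaranteed by \Cref{thm: pi_i}) to ensure that $d(\pi_I)$, $V^{\pi_I}$, and $Q^{\pi_I}$ are all well-defined, so that the average-cost Bellman (Poisson) equation $V^{\pi_I}(\mathbf s)=\sum_{\mathbf a}\pi_I(\mathbf a|\mathbf s)Q^{\pi_I}(\mathbf s,\mathbf a)$ holds together with $Q^{\pi_I}(\mathbf s,\mathbf a)=c(\mathbf s)-\eta(\pi_I)+\sum_{\mathbf s'}P(\mathbf s'|\mathbf s,\mathbf a)V^{\pi_I}(\mathbf s')$.

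Next I would differentiate the Poisson equation with respect to $\theta$. Since $c(\mathbf s)$ and $P$ carry no $\theta$ dependence, this produces a recursion of the form $\nabla_\theta V^{\pi_I}(\mathbf s)=\sum_{\mathbf a}\nabla_\theta\pi_I(\mathbf a|\mathbf s)Q^{\pi_I}(\mathbf s,\mathbf a)-\nabla_\theta\eta(\pi_I)+\sum_{\mathbf a}\pi_I(\mathbf a|\mathbf s)\sum_{\mathbf s'}P(\mathbf s'|\mathbf s,\mathbf a)\nabla_\theta V^{\pi_I}(\mathbf s')$, which rearranges into a pointwise identity isolating $\nabla_\theta\eta(\pi_I)$ for every $\mathbf s$. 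I would then take the expectation of both sides under $d(\pi_I)$: the constant $\nabla_\theta\eta(\pi_I)$ is unaffected because $d(\pi_I)$ sums to one, and the two $\nabla_\theta V^{\pi_I}$ terms cancel via the stationarity identity $\sum_{\mathbf s}d(\pi_I)(\mathbf s)P_{\pi_I}(\mathbf s'|\mathbf s)=d(\pi_I)(\mathbf s')$. What survives is $\nabla_\theta\eta(\pi_I)=\sum_{\mathbf s}d(\pi_I)(\mathbf s)\sum_{\mathbf a}\nabla_\theta\pi_I(\mathbf a|\mathbf s)Q^{\pi_I}(\mathbf s,\mathbf a)$.

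The step that produces the intervention-specific form is the treatment of $\nabla_\theta\pi_I$. Because $I(\mathbf s)$ and $\pi_0$ are $\theta$-independent, $\nabla_\theta\pi_I(\mathbf a|\mathbf s)=(1-I(\mathbf s))\nabla_\theta\pi_\theta(\mathbf a|\mathbf s)$, which already inserts the indicator factor. To recast this as an expectation under the \emph{sampling} policy $\pi_I$, I would apply the log-derivative identity $\nabla_\theta\pi_\theta=\pi_\theta\nabla_\theta\log\pi_\theta$ and then use the key observation that on the support of the factor $(1-I(\mathbf s))$, i.e. on the learning region where $I(\mathbf s)=0$, we have $\pi_I(\cdot|\mathbf s)=\pi_\theta(\cdot|\mathbf s)$ exactly. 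Hence $(1-I(\mathbf s))\nabla_\theta\pi_\theta(\mathbf a|\mathbf s)=(1-I(\mathbf s))\pi_I(\mathbf a|\mathbf s)\nabla_\theta\log\pi_\theta(\mathbf a|\mathbf s)$ holds for all $\mathbf s$ (trivially when $I(\mathbf s)=1$, since both sides vanish), which lets me fold the inner sum over $\mathbf a$ back into an expectation $\mathbf a\sim\pi_I(\cdot|\mathbf s)$ and recover \cref{eq:iapg}.

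I expect the main obstacle to be rigor rather than algebra: justifying the interchange of $\nabla_\theta$ with the infinite sums and with the expectation over the countably-infinite, unbounded state space. Unlike the finite-state argument in \cite{sutton1999}, here I must argue that the telescoping series $\sum_{\mathbf s}d(\pi_I)(\mathbf s)\nabla_\theta V^{\pi_I}(\mathbf s)$ converges absolutely and that a dominated-convergence argument legitimizes each exchange. This is precisely where strong stability does the work: positive recurrence of $\{\mathbf s_t\}$ under $\pi_I$, combined with the confinement of all $\theta$-dependence to the finite region $\mathcal S_\theta$, should supply the integrability needed, so I would state these regularity conditions explicitly before carrying out the formal computation.
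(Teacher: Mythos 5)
Your proposal is correct and follows essentially the same route as the paper's proof: differentiating the average-cost Poisson equation, using that $\theta$ enters only through $(1-I(\mathbf s))\pi_\theta$, and cancelling the $\nabla_\theta V^{\pi_I}$ terms by summing against the stationary distribution. Your two additions are actually refinements the paper omits --- the explicit log-derivative conversion using $(1-I(\mathbf s))\pi_I(\mathbf a|\mathbf s)=(1-I(\mathbf s))\pi_\theta(\mathbf a|\mathbf s)$ to reach the stated expectation form, and the observation that the interchange of $\nabla_\theta$ with the infinite sums over the unbounded state space needs justification via strong stability.
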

\begin{proof}
We start by taking the partial derivative of the state value function $V^{\pi_I}(\mathbf s)$ with respect to our actor network parameters $\theta$. 
\begin{align*}
\frac{\partial V^{\pi_I}(\mathbf s)}{\partial \theta} &= \frac{\partial}{\partial\theta}\mathbb{E}_{\pi_I}\left[Q^{\pi_I}(\mathbf s,\mathbf a)|\mathbf a\sim\pi_I(\cdot|\mathbf s)\right] \\
&= \frac{\partial}{\partial\theta} \sum_{\mathbf a} \pi_I(\mathbf a|\mathbf s)Q^{\pi_\theta}(\mathbf s,\mathbf a) \\
\intertext{Using the definition of $\pi_I(\mathbf a|\mathbf s)$}
&= \frac{\partial}{\partial\theta} \sum_{\mathbf a} \left(I(\mathbf s)\pi_0(\mathbf a|\mathbf s) + (1-I(\mathbf s))\pi_\theta(\mathbf a|\mathbf s)\right)Q^{\pi_I}(\mathbf s,\mathbf a) \\
&=\sum_{\mathbf a}\left[\frac{\partial}{\partial\theta} I(\mathbf s)\pi_0(\mathbf a|\mathbf s) Q^{\pi_I}(\mathbf s,\mathbf a) + \frac{\partial}{\partial\theta}(1-I(\mathbf s))\pi_\theta(\mathbf a|\mathbf s)Q^{\pi_I}(\mathbf s,\mathbf a) \right]
\end{align*}
By the chain rule: 
\begin{align*}
    \frac{\partial}{\partial\theta} I(\mathbf s)\pi_0(\mathbf a|\mathbf s)Q^{\pi_I}(\mathbf s,\mathbf a) &= I(\mathbf s)\left[Q^{\pi_I}(\mathbf a,\mathbf s)\frac{\partial}{\partial\theta}\pi_0(\mathbf a|\mathbf s)+ \pi_0(\mathbf a|\mathbf s)\frac{\partial}{\partial\theta}Q^{\pi_I}(\mathbf s,\mathbf a)\right] \\
    &=I(\mathbf s) \pi_0(\mathbf a|\mathbf s)  \frac{\partial}{\partial\theta}Q^{\pi_I}(\mathbf s,\mathbf a)
\end{align*}
and similarly:
\begin{equation*}
    \frac{\partial}{\partial\theta} (1-I(\mathbf s))\pi_\theta(\mathbf a|\mathbf s)Q^{\pi_I}(\mathbf s,\mathbf a) = (1-I(\mathbf s))[Q^{\pi_I}(\mathbf s,\mathbf a) \frac{\partial}{\partial\theta}\pi_\theta(\mathbf a|\mathbf s) + \pi_\theta(\mathbf a|\mathbf s) \frac{\partial}{\partial\theta} Q^{\pi_I}(\mathbf s,\mathbf a) ]
\end{equation*}
The partial derivative of $Q^{\pi_I}(\mathbf s,\mathbf a)$ is:
\begin{align*}
    \frac{\partial}{\partial\theta} Q^{\pi_I}(\mathbf s,\mathbf a) &= \frac{\partial}{\partial\theta}\left[\mathbb{E}[c(\mathbf s)] -\eta(\pi_I) + \sum_{\mathbf s'}P(\mathbf s'|\mathbf s,\mathbf a)V^{\pi_I}(\mathbf s')\right] \\
    &= - \frac{\partial}{\partial\theta}\eta(\pi_I) + \sum_{\mathbf s'}P(\mathbf s'|\mathbf s,\mathbf a)\frac{\partial}{\partial\theta}V^{\pi_I}(\mathbf s')
\end{align*}
Using this fact, we get:
\begin{align*}
    \frac{\partial V^{\pi_I}(\mathbf s)}{\partial \theta} &= \sum_{\mathbf a}\left[I(s)\pi_0(\mathbf a|\mathbf s)\left(- \frac{\partial}{\partial\theta}\eta(\pi_I) + \sum_{\mathbf s'}P(\mathbf s'|\mathbf s,\mathbf a)\frac{\partial}{\partial\theta}V^{\pi_I}(\mathbf s')\right)\right] \\ &\quad+\sum_{\mathbf a}\left[Q^{\pi_I}(\mathbf s,\mathbf a)\frac{\partial}{\partial\theta}(1-I(\mathbf s))\pi_\theta(\mathbf a|\mathbf s) \right] \nonumber\\
&\quad+\sum_{\mathbf a}\left[(1-I(\mathbf s))\pi_\theta(\mathbf a|\mathbf s)\left(-\frac{\partial}{\partial\theta}\eta(\pi_I) + \sum_{s'}P(\mathbf s'|\mathbf s,\mathbf a)\frac{\partial}{\partial\theta}V^{\pi_I}(s')\right)  \right] \nonumber\\
&= -\frac{\partial}{\partial\theta}\eta(\pi_I)\sum_{\mathbf a} \left[I(\mathbf s)\pi_0(\mathbf a|\mathbf s) + (1-I(\mathbf s))\pi_\theta(\mathbf a|\mathbf s)\right] \\ &\quad \nonumber + \sum_{\mathbf a}\left[ \left(\pi_0(\mathbf a|\mathbf s) + (1-I(\mathbf s))\pi_\theta(\mathbf a|\mathbf s)\right)\sum_{\mathbf s'}\frac{\partial}{\partial\theta}P(\mathbf s'|\mathbf s,\mathbf a)V^{\pi_I}(\mathbf s')\right]\ \\
\nonumber & \quad +\sum_{\mathbf a}\left[ Q^{\pi_I}(\mathbf s,\mathbf a)(1-I(\mathbf s))\frac{\partial}{\partial\theta}\pi_\theta(\mathbf a|\mathbf s)\right]
\end{align*}
Using the fact that $\pi_I(\mathbf a|\mathbf s) = I(\mathbf s)\pi_0(\mathbf a|\mathbf s) + (1-I(\mathbf s))\pi_\theta(\mathbf a|\mathbf s)$ and $\sum_{\mathbf a} \pi_I(\mathbf a|\mathbf s)=1$:
\begin{align*}
    \frac{\partial V^{\pi_I}(\mathbf s)}{\partial \theta} &= -\frac{\partial}{\partial\theta}\eta(\pi_I) +\sum_{\mathbf a}\left[ \pi_I(\mathbf a|\mathbf s)\sum_{\mathbf s'}P(\mathbf s'|\mathbf s,\mathbf a)\frac{\partial}{\partial\theta}V^{\pi_I}(\mathbf s')\right]+\sum_{\mathbf a}\left[ Q^{\pi_I}(\mathbf s,\mathbf a)(1-I(\mathbf s))\frac{\partial}{\partial\theta}\pi_\theta(\mathbf a|\mathbf s)\right]
\end{align*}
Solving for $\frac{\partial}{\partial\theta}\eta(\pi_I)$ yields:
\begin{align*}
    \frac{\partial}{\partial\theta}\eta(\pi_I) &= \sum_{\mathbf a}\left[ \pi_I(\mathbf a|\mathbf s)\sum_{\mathbf s'}P(\mathbf s'|\mathbf s,\mathbf a)\frac{\partial}{\partial\theta}V^{\pi_I}(\mathbf s')\right]+\sum_{\mathbf a}\left[ Q^{\pi_I}(\mathbf s,\mathbf a)(1-I(\mathbf s))\frac{\partial}{\partial\theta}\pi_\theta(\mathbf a|\mathbf s)\right] -   \frac{\partial V^{\pi_I}(\mathbf s)}{\partial \theta}
\end{align*}
Summing both sides over the stationary distribution $d_{\pi_I}(\mathbf s)$ yields:
\begin{align*}
\sum_s d_{\pi_I}(\mathbf s) \frac{\partial}{\partial\theta}\eta(\pi_I) &= \sum_{\mathbf s} d_{\pi_I}(\mathbf s) \sum_{\mathbf a}\left[ \pi_I(\mathbf a|\mathbf s)\sum_{\mathbf s'}P(\mathbf s'|\mathbf s,\mathbf a)\frac{\partial}{\partial\theta}V^{\pi_I}(\mathbf s')\right]\\ \nonumber & \quad +\sum_{\mathbf s} d_{\pi_I}(\mathbf s)\sum_{\mathbf a}\left[ Q^{\pi_I}(\mathbf s,\mathbf a)(1-I(\mathbf s))\frac{\partial}{\partial\theta}\pi_\theta(\mathbf a|\mathbf s)\right] \\ \nonumber & \quad-   \sum_{\mathbf s} d_{\pi_I}(\mathbf s)\frac{\partial V^{\pi_I}(\mathbf s)}{\partial \theta}
\end{align*}
Since $d_{\pi_I}(\mathbf s)$ is stationary: 
\begin{align*}
\sum_{\mathbf s} d_{\pi_I}(\mathbf s)\sum_{\mathbf a} \pi_I(\mathbf a|\mathbf s)\sum_{\mathbf s'}P(\mathbf s'|\mathbf s,\mathbf a) = \sum_{\mathbf s} d_{\pi_I}(\mathbf s)
\end{align*}
As a result:
\begin{align*}
    \sum_{\mathbf s} d_{\pi_I}(\mathbf s) \frac{\partial}{\partial\theta}\eta(\pi_I) &= \sum_{\mathbf s} d_{\pi_I}(\mathbf s)\frac{\partial}{\partial\theta}V^{\pi_I}(\mathbf s')\\ \nonumber & \quad +\sum_{\mathbf s} d_{\pi_I}(\mathbf s)\sum_{\mathbf a}\left[ Q^{\pi_I}(\mathbf s,\mathbf a)(1-I(\mathbf s))\frac{\partial}{\partial\theta}\pi_\theta(\mathbf a|\mathbf s)\right] \\ \nonumber & \quad-   \sum_{\mathbf s} d_{\pi_I}(\mathbf s)\frac{\partial V^{\pi_I}(\mathbf s)}{\partial \theta} 
\end{align*}
The first and last sum cancels and $\eta(\pi_I)$ is not a function of $\mathbf s$, thus:
\begin{equation*}
    \frac{\partial}{\partial\theta}\eta(\pi_I)=\sum_{\mathbf s} d_{\pi_I}(\mathbf s)\sum_a Q^{\pi_I}(\mathbf s,\mathbf a)(1-I(\mathbf s))\frac{\partial}{\partial\theta}\pi_\theta(\mathbf a|\mathbf s)
\end{equation*}
\end{proof}

\Cref{eq:iapg} bears a strong resemblance to the original policy gradient theorem given in \cref{eq: og_pg} albeit with a few key distinctions. First, the expectation for the intervention-assisted policy gradient is with respect to the steady-state  and action distributions induced by the intervention-assisted policy $\pi_I$.  Additionally, the intervention-assisted policy gradient depends on the state-action value function $Q^{\pi_I}(\mathbf s,\mathbf a)$ which captures the state-action values with respect to the entire intervention-assisted policy instead of  just $\pi_\theta$ .  Like \cref{eq: pgrad}, the intervention-assisted policy gradient depends on $\nabla_\theta \log\pi_\theta(\mathbf a|\mathbf s)$, but note that the $(1-I(\mathbf s))$ term blocks direct contributions to the overall gradient from any states where an intervention occurred. The overall performance of $\pi_I$, including the contributions from $\pi_0$ during interventions, still effects the gradient through the state-action value function $Q^{\pi_I}(\mathbf s,\mathbf a)$ and the dependence on the steady-state distribution $d({\pi_I})$. Like in the non-intervention assisted case, \ref{eq:iapg} only theoretically supports a single update-step per trajectory generated. 

\subsection{Intervention-Assisted Policy Improvement Bounds}
This section establishes bounds of the form \cref{eqn: improvbound} for intervention-assisted policies.  These bounds allow us to extend the trust-region methods for intervention-assisted policies. 

\begin{thm} \label{thm: ia_pib}
Consider two different strongly stable intervention assisted policies $\pi_I'$ and $\pi_I$  that utilize the same learning region $\mathcal S_\theta$ and intervention policy $\pi_0$ and only differ in their actor policies $\pi_\theta'$ and $\pi_{\theta}$ respectively.  The performance difference is bounded as:
\begin{align} 
    \delta(\pi_I',\pi_I)   \leq &  \underset{\substack{\mathbf s\sim d({\pi_I})\\\mathbf a \sim \pi_\theta(\cdot|\mathbf s)}}{\mathbb E}\left[A^{\pi_I}(\mathbf s, \mathbf a)\mathcal R^{\theta'}_{\theta}(\mathbf a|\mathbf s) \mid \mathbf s\in \mathcal S_\theta\right] \  +  \underset{\substack{\mathbf s\sim d(\pi_I)\\\mathbf a \sim \pi_0(\cdot|\mathbf s)}}{\mathbb E}\left[A^{\pi_I}(\mathbf s, \mathbf a) \mid \mathbf s\in \mathcal S_0\right]  +  \mathcal D(d(\pi_I'), d(\pi_I)) \label{eq: thmbound}
\end{align}
\noindent where $R^{\theta'}_{\theta}(\mathbf a|\mathbf s) = \frac{\pi_{\theta'}(\mathbf a|\mathbf s)}{\pi_\theta(\mathbf a|\mathbf s)}$ and:
\begin{align*}
    \mathcal D(d({\pi_I'}),d({\pi_I})) &= 2 \sup_{\mathbf s\in \mathcal S} \left|\underset{\mathbf a \sim \pi_I'(\cdot|\mathbf s)}{\mathbb E}[A^{\pi_I}(\mathbf s, \mathbf a)]\right| D_{\mathrm{TV}}\left(d(\pi_I' \| d(\pi_I)\right)
\end{align*}
\end{thm}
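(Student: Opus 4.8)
The plan is to adapt the standard trust-region performance-difference decomposition (the bound in \cref{eqn: improvbound}) to the intervention-assisted setting, exploiting the fact that $\pi_I'$ and $\pi_I$ share the same intervention policy $\pi_0$ and the same partition $(\mathcal S_\theta, \mathcal S_0)$. First I would invoke the average-cost performance difference lemma, which expresses $\delta(\pi_I',\pi_I) = \eta(\pi_I') - \eta(\pi_I)$ as an expectation of the advantage $A^{\pi_I}(\mathbf s,\mathbf a)$ taken over the steady-state/action distribution of the \emph{new} policy $\pi_I'$, i.e.\ $\delta(\pi_I',\pi_I) = \mathbb E_{\mathbf s\sim d(\pi_I'),\,\mathbf a\sim \pi_I'(\cdot|\mathbf s)}[A^{\pi_I}(\mathbf s,\mathbf a)]$. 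This is the intervention-assisted analogue of Kakade--Langford, and it should follow directly from the definitions of $V^{\pi_I}$, $Q^{\pi_I}$, $A^{\pi_I}$, and $\eta(\pi_I)$ given earlier, using that $\pi_I$ is strongly stable so $d(\pi_I')$ exists.

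Next I would replace the sampling distribution $d(\pi_I')$ by $d(\pi_I)$, which is the step that produces the discrepancy term $\mathcal D$. Writing $\delta(\pi_I',\pi_I) = \mathbb E_{\mathbf s\sim d(\pi_I),\,\mathbf a\sim \pi_I'(\cdot|\mathbf s)}[A^{\pi_I}(\mathbf s,\mathbf a)] + \bigl(\mathbb E_{d(\pi_I')} - \mathbb E_{d(\pi_I)}\bigr)[\,\overline A^{\pi_I'}(\mathbf s)\,]$, where $\overline A^{\pi_I'}(\mathbf s)=\mathbb E_{\mathbf a\sim\pi_I'(\cdot|\mathbf s)}[A^{\pi_I}(\mathbf s,\mathbf a)]$, I would bound the correction by Hölder's inequality: the difference of two expectations of a bounded function under two distributions is at most $2\sup_{\mathbf s}|\overline A^{\pi_I'}(\mathbf s)|$ times the total-variation distance $D_{\mathrm{TV}}(d(\pi_I')\,\|\,d(\pi_I))$. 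This recovers exactly the stated $\mathcal D(d(\pi_I'),d(\pi_I))$, so this portion is essentially mechanical once the decomposition is written down.

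The key structural step, and the part that differs from the ordinary trust-region bound, is splitting the main expectation $\mathbb E_{\mathbf s\sim d(\pi_I),\,\mathbf a\sim \pi_I'(\cdot|\mathbf s)}[A^{\pi_I}(\mathbf s,\mathbf a)]$ over the partition of the state space. On the intervention region $\mathcal S_0$ both policies act through $\pi_0$, so $\pi_I'(\cdot|\mathbf s)=\pi_0(\cdot|\mathbf s)$ there and this contribution becomes $\mathbb E_{\mathbf s\sim d(\pi_I),\,\mathbf a\sim\pi_0(\cdot|\mathbf s)}[A^{\pi_I}(\mathbf s,\mathbf a)\mid \mathbf s\in\mathcal S_0]$ — the second term in \eqref{eq: thmbound}, with no importance-sampling ratio because the two action distributions coincide. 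On the learning region $\mathcal S_\theta$ we have $\pi_I'(\cdot|\mathbf s)=\pi_{\theta'}(\cdot|\mathbf s)$, and I would convert the expectation under $\pi_{\theta'}$ into one under $\pi_\theta$ via the importance-sampling identity $\mathbb E_{\mathbf a\sim\pi_{\theta'}}[\,\cdot\,] = \mathbb E_{\mathbf a\sim\pi_\theta}[\mathcal R^{\theta'}_{\theta}(\mathbf a|\mathbf s)\,\cdot\,]$, yielding the first term of \eqref{eq: thmbound}. The main obstacle I anticipate is justifying that the decomposition and the Hölder bound remain valid over an unbounded state space with an unbounded advantage function: the supremum $\sup_{\mathbf s}|\mathbb E_{\mathbf a\sim\pi_I'(\cdot|\mathbf s)}[A^{\pi_I}(\mathbf s,\mathbf a)]|$ must be finite and the relevant sums/integrals absolutely convergent, which is where the strong-stability hypothesis (via \Cref{thm: pi_i} and \Cref{lem: finite_drift}) and the finiteness of $\mathcal S_\theta$ do the real work — I would isolate and verify these integrability conditions before manipulating the expectations, since the classical finite-state arguments do not apply verbatim here.
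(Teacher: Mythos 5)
Your proposal is correct and follows essentially the same route as the paper: the paper's proof invokes its Lemma~\ref{lem: div} (the average-cost performance-difference bound with the total-variation correction term, which you re-derive inline via the Kakade--Langford identity plus a H\"older/TV bound), then rewrites the expectation with the importance-sampling ratio $\pi_I'/\pi_I$ and splits it over $\mathcal S_\theta$ and $\mathcal S_0$, where the ratio reduces to $\mathcal R^{\theta'}_{\theta}$ and to $1$ respectively. The only cosmetic differences are that you split the state space before introducing the ratio rather than after, and you make explicit the integrability caveats that the paper folds into the strong-stability hypothesis accompanying Lemma~\ref{lem: div}.
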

\Cref{eq: thmbound} resembles analogous bounds for non-intervention-assisted methods shown in \cref{eqn: improvbound}, but with some key distinctions. The first difference is in conditioning. In \cref{eq: thmbound}, the first term is only considered when the state falls within the learning region $\mathcal S_\theta$. Similar to \cref{eq:iapg}, it depends on the intervention assisted policy through the advantage function $A^{\pi_I}(\mathbf s,\mathbf a)$ and on the actor policies through  $ R^{\theta'}_{\theta}(\mathbf a|\mathbf s)$. To minimize this term, the ratio $ R^{\theta'}_{\theta}(\mathbf a|\mathbf s)$ should be maximized (minimized) when $A^{\pi_I}(\mathbf s, \mathbf a)$ is negative (positive).  The next major difference is that \cref{eqn: improvbound} lacks the second term present in \cref{eq: thmbound}. This term accounts for the performance of the intervention assisted policy $\pi_I$ in the region $\mathcal S_0$.  This term goes to zero as the actor policy $\pi_\theta$ learns to keep the state within $\mathcal S_\theta$. The last term in \cref{eq: thmbound} is a measure of dissimilar between the steady-state distributions induced by $\pi_I'$ and $\pi_I$. This term is strictly positive, meaning to minimize the bound, the difference between policies should be minimized.   

The proof of \Cref{thm: ia_pib} requires the following Lemma, which is based on Lemma 2 in \cite{zhang2021} developed for the average-reward MDPs in bounded state-spaces.The proofs for the following lemma is identical to the proof found in \cite{zhang2021}, except that we must assume the two policies are strongly stable enable to ensures the existence of the steady state distributions for the case of unbounded state spaces.

\begin{lem}\label{lem: div}
For average cost MDPs with unbounded state-spaces, and strongly stable policies $\pi'$ and $\pi$ with corresponding steady-state distributions $d_{\pi'}$ and $d_{\pi}$, the following bounds hold:

\begin{align}
    \left|\eta\left(\pi^{\prime}\right)-\eta(\pi)-\underset{\substack{s \sim d_{\pi} \\
a \sim \pi^{\prime}}}{\mathbb{E}}\left[A^{\pi}(s, a)\right]\right|&\leq 
\sup_{\mathbf s\in \mathcal S} \left|\underset{\mathbf a \sim \pi'(\cdot|\mathbf s)}{\mathbb E}[A^\pi(\mathbf s, \mathbf a)]\right| \left|| d_{\pi'}-d_\pi \right||_1 \\ &
= 2 \sup_{\mathbf s\in \mathcal S} \left|\underset{\mathbf a \sim \pi'(\cdot|\mathbf s)}{\mathbb E}[A^\pi(\mathbf s, \mathbf a)]\right| D_{\mathrm{TV}}\left(d^{\pi^{\prime}} \| d^{\pi}\right)
\end{align}
where $D_{\mathrm{TV}}\left(d_{\pi^{\prime}} \| d_{\pi}\right)=\sup_{\mathbf s\in \mathcal S}|d_{\pi'}(\mathbf s)-d_{\pi}(\mathbf s)|$
\end{lem}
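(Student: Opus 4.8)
The plan is to follow the structure of the average-reward argument in \cite{zhang2021}, first establishing the \emph{exact} average-cost performance difference identity and then bounding the error incurred by evaluating the advantage under $d_\pi$ rather than $d_{\pi'}$. The starting point is the Bellman relation $Q^\pi(\mathbf s,\mathbf a) = c(\mathbf s) - \eta(\pi) + \mathbb E_{\mathbf s'\sim P(\cdot|\mathbf s,\mathbf a)}[V^\pi(\mathbf s')]$, which is well defined whenever $\eta(\pi)$ is finite. Taking $\mathbf s\sim d_{\pi'}$ and $\mathbf a\sim\pi'(\cdot|\mathbf s)$ and summing $A^\pi = Q^\pi - V^\pi$ gives
\begin{equation*}
\underset{\substack{\mathbf s\sim d_{\pi'}\\\mathbf a\sim\pi'}}{\mathbb E}[A^\pi(\mathbf s,\mathbf a)] = \underset{\mathbf s\sim d_{\pi'}}{\mathbb E}[c(\mathbf s)] - \eta(\pi) + \underset{\substack{\mathbf s\sim d_{\pi'}\\\mathbf a\sim\pi',\,\mathbf s'\sim P}}{\mathbb E}[V^\pi(\mathbf s')] - \underset{\mathbf s\sim d_{\pi'}}{\mathbb E}[V^\pi(\mathbf s)].
\end{equation*}
Because $d_{\pi'}$ is stationary under $\pi'$, the law of the successor state $\mathbf s'$ is again $d_{\pi'}$, so the last two terms cancel and the right-hand side collapses to $\eta(\pi')-\eta(\pi)$. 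This yields the identity $\delta(\pi',\pi) = \mathbb E_{\mathbf s\sim d_{\pi'},\,\mathbf a\sim\pi'}[A^\pi(\mathbf s,\mathbf a)]$.

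With the identity in hand, I would subtract $\mathbb E_{\mathbf s\sim d_\pi,\,\mathbf a\sim\pi'}[A^\pi(\mathbf s,\mathbf a)]$ from $\eta(\pi')-\eta(\pi)$. Writing $f(\mathbf s) := \mathbb E_{\mathbf a\sim\pi'(\cdot|\mathbf s)}[A^\pi(\mathbf s,\mathbf a)]$, the left-hand side of the lemma becomes the single difference $\bigl|\sum_{\mathbf s}(d_{\pi'}(\mathbf s)-d_\pi(\mathbf s))f(\mathbf s)\bigr|$, since the two expectations share the same integrand and differ only in the state distribution. Applying H\"older's inequality (pairing $\ell_\infty$ against $\ell_1$) bounds this by $\sup_{\mathbf s}|f(\mathbf s)|\cdot\|d_{\pi'}-d_\pi\|_1$, which is exactly the first displayed bound. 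The second form then follows from the standard identity $\|d_{\pi'}-d_\pi\|_1 = 2\,D_{\mathrm{TV}}(d_{\pi'}\|d_\pi)$ relating the $\ell_1$ distance to total variation.

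The only place the unbounded state space matters, and the reason the strong-stability hypothesis is needed, is in justifying the manipulations above, which the bounded-space proof of \cite{zhang2021} gets for free. Strong stability of $\pi$ and $\pi'$ guarantees via \Cref{thm: drift_stability} that both induced chains are positive recurrent with unique, initial-condition-independent stationary distributions $d_\pi,d_{\pi'}$, and that $\eta(\pi),\eta(\pi')$ are finite so that $A^\pi$ is well defined. I expect the main technical obstacle to be the telescoping cancellation of the $V^\pi$ terms: in an unbounded space this is legitimate only if $\mathbb E_{\mathbf s\sim d_{\pi'}}[|V^\pi(\mathbf s)|]<\infty$, so that one is not manipulating an $\infty-\infty$ expression and the rearrangement of the infinite sums is valid. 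This integrability can be secured by combining the finite stationary mean of $\bar q$ implied by strong stability with the drift bound, which controls the growth of $V^\pi$ relative to the Lyapunov function $\Phi$. If instead $\sup_{\mathbf s}|f(\mathbf s)|$ is infinite the claimed inequality is vacuously true, so the substantive content is confined to the regime where both sides are finite.
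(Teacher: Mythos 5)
Your proof is correct and takes essentially the same approach as the paper, which writes out no proof of its own but defers to Lemma 2 of \cite{zhang2021}: the exact performance-difference identity $\eta(\pi')-\eta(\pi)=\mathbb{E}_{\mathbf s\sim d_{\pi'},\,\mathbf a\sim \pi'}[A^{\pi}(\mathbf s,\mathbf a)]$ obtained from the Bellman relation and stationarity of $d_{\pi'}$, followed by the H\"older $\ell_\infty$--$\ell_1$ bound and the conversion $\|d_{\pi'}-d_\pi\|_1 = 2D_{\mathrm{TV}}(d_{\pi'}\|d_\pi)$, with strong stability invoked exactly as you invoke it to guarantee existence of the stationary distributions and finiteness of $\eta(\pi),\eta(\pi')$. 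Your additional attention to $\mathbb{E}_{\mathbf s\sim d_{\pi'}}[|V^{\pi}(\mathbf s)|]<\infty$, needed so the telescoping cancellation is not an $\infty-\infty$ manipulation on the unbounded state space, is a sound sharpening of precisely the point the paper dispatches by assumption alone.
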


\begin{proof}[Proof of Theorem \ref{thm: ia_pib}]
From \Cref{lem: div}, we can bound the difference in average cost of strongly stable intervention policies $\pi_I'$ and $\pi_I$ as:
\begin{align*}
    \eta(\pi_I')-\eta(\pi_I) \leq \underset{\substack{s \sim d_{\pi} \\
a \sim \pi^{\prime}}}{\mathbb{E}}\left[A^{\pi_I}(s, a)\right ] + \mathcal D(d_{\pi_{\pi_I'}},d_{\pi_I})
\end{align*}
where $\mathcal D(d_{\pi_I'},d_{\pi_I}) = 2 \sup_{\mathbf s\in \mathcal S} \left|\underset{\mathbf a \sim \pi_I'(\cdot|\mathbf s)}{\mathbb E}[A^{\pi_I}(\mathbf s, \mathbf a)]\right| D_{\mathrm{TV}}\left(d_{\pi_I^{\prime}} \| d^{\pi_I}\right)$.

The expectation can be reexpressed using the importance sampling ration between policies:
\begin{align*}
    \underset{\substack{\mathbf s\sim d_{\pi_I}\\\mathbf a \sim \pi'_I(\cdot|\mathbf s)}}{\mathbb E}[A^{\pi_I}(\mathbf s, \mathbf a)] 
    & = \underset{\substack{\mathbf s\sim d_{\pi_I}\\\mathbf a \sim \pi_I(\cdot|\mathbf s)}}{\mathbb E}\left[\frac{\pi_I'(\mathbf a|\mathbf s)}{\pi_I(\mathbf a|\mathbf s)} A^{\pi_I}(\mathbf s, \mathbf a)\right]
\end{align*}

Using the law of total probability, we can break this expectation over the sum of expectations over the disjoint regions $\mathcal S_\theta$ and $\mathcal S_0$: 
\begin{equation}
    \underset{\substack{\mathbf s\sim d_{\pi}\\\mathbf a \sim \pi_I(\cdot|\mathbf s)}}{\mathbb E}\left[ A^{\pi_I}(\mathbf s, \mathbf a) \frac{\pi_I'(\mathbf a|\mathbf s)}{\pi_I(\mathbf a|\mathbf s)}\right]=
    \underset{\substack{\mathbf s\sim d_{\pi_I}\\\mathbf a \sim \pi_I(\cdot|\mathbf s)}}{\mathbb E}\left[ A^{\pi_I}(\mathbf s, \mathbf a) \frac{\pi_I'(\mathbf a|\mathbf s)}{\pi_I(\mathbf a|\mathbf s)}\bigg| \mathbf s\in \mathcal S_\theta\right] + \underset{\substack{\mathbf s\sim d_{\pi_I}\\\mathbf a \sim \pi_I(\cdot|\mathbf s)}}{\mathbb E}\left[ A^{\pi_I}(\mathbf s, \mathbf a)\frac{\pi_I'(\mathbf a|\mathbf s)}{\pi_I(\mathbf a|\mathbf s)}\bigg| \mathbf s\in \mathcal S_0\right]  
\end{equation}
Using the fact that $\pi_I(\mathbf a|\mathbf s,\mathbf s\in \mathcal S_\theta)=\pi_\theta(\mathbf a|\mathbf s, \mathbf s\in \mathcal S_\theta)$ and $\pi_I(\mathbf a|\mathbf s, \mathbf s\in \mathcal S_0)=\pi_0(\mathbf a|\mathbf s, \mathbf s\in \mathcal S_0)$:
\begin{align}
    \underset{\substack{\mathbf s\sim d_{\pi_I}\\\mathbf a \sim \pi_I(\cdot|\mathbf s)}}{\mathbb E}\left[ A^{\pi_I}(\mathbf s, \mathbf a) \frac{\pi_I'(\mathbf a|\mathbf s)}{\pi_I(\mathbf a|\mathbf s)}\bigg| \mathbf s\in \mathcal S_\theta\right] + \underset{\substack{\mathbf s\sim d_{\pi_I}\\\mathbf a \sim \pi_I(\cdot|\mathbf s)}}{\mathbb E}\left[ A^{\pi_I}(\mathbf s, \mathbf a)\frac{\pi_I'(\mathbf a|\mathbf s)}{\pi_I(\mathbf a|\mathbf s)}\bigg| \mathbf s\in \mathcal S_0\right] \\ = \underset{\substack{\mathbf s\sim d_{\pi_I}\\\mathbf a \sim \pi_\theta(\cdot|\mathbf s)}}{\mathbb E}\left[A^{\pi_I}(\mathbf s, \mathbf a)\frac{\pi_\theta'(\mathbf a|\mathbf s)}{\pi_\theta(\mathbf a|\mathbf s)} \bigg| \mathbf s\in \mathcal S_\theta\right] + \underset{\substack{\mathbf s\sim d_{\pi_I}\\\mathbf a \sim \pi_0(\cdot|\mathbf s)}}{\mathbb E}\left[A^{\pi_I}(\mathbf s, \mathbf a) \bigg| \mathbf s\in \mathcal S_0\right] 
\end{align}
where the importance sampling ratio disappears in the second term because $\frac{\pi_0(\mathbf a|\mathbf s)}{\pi_0(\mathbf a|\mathbf s)}=1$.  
\end{proof}

\section{Algorithms}
Building on the theoretical foundations of the previous sections, this section presents two practical algorithms for online training of intervention-assisted policies.  These algorithms follow the same structure of on-policy actor-critic reinforcement learning algorithms \cite{sutton2018}.  Both algorithms follow a two-phase approach, consisting of a policy rollout phase and policy update phase, repeated across multiple training episodes $e=1,2, ..., E$:
\begin{enumerate}
    \item \textbf{Policy Rollout Phase}: The current policy $\pi_I^{(e)}$ interacts with the environment and generates a trajectory $\tau^{(e)}$. 
    \item \textbf{Policy Update Phase} The trajectory $\tau^{(e)}$ is used in computing gradient updates. The trajectory is re-used in $U$ update epochs to provide a sequence of updated policies $\quad (\pi_I^{(e,0)}, \pi_I^{(e,1)}, ... \pi_I^{(e,U)})$ where $\pi_{I}^{(e,u)}$ for $u>0$ refers to the intervention assisted policy after the $u$th update epoch.
\end{enumerate}

Here $\pi_I^{(e,0)}$ corresponds to the original policy $\pi_I^{(e)}$ that generated the trajectory $\tau^{(e)}$ in policy rollout phase.  After all $U$ updates, the most recently updated policy $\pi_I^{(e,U)}$ becomes starting policy for the next episode $(e+1)$.

Each of the following algorithms differ only in their loss functions. For each algorithm, a trajectory $\tau^{(e)}$ is generated during the rollout phase of episode $e$, and the policy parameters are updated $U$ times via stochastic gradient descent:
\begin{align}
    \theta^{(e,u+1)} = \theta^{(e,u)}-\alpha \nabla_\theta \mathcal L_{pol}(\pi_I^{(e,u)},\tau^{(e)})
\end{align}
where $\alpha$ is the learning rate and  $L_{pol}(\pi_I^{(e,u)},\tau^{(e)})$ is the policy loss function which is a function of the current policy $\pi_I^{(e,u)}$ and the trajectory $\tau^{(e)}$. 

The first algorithm, the Intervention-Assisted Policy Gradient (IA-PG) algorithm, is an extension of the Vanilla Policy Gradient (VPG) algorithm\footnote{\href{https://spinningup.openai.com/en/latest/algorithms/vpg.html}{Vanilla Policy Gradient — Spinning Up documentation (openai.com)} } to the intervention-assisted setting. The IA-PG algorithm utilizes the following loss function:
\begin{align} 
   \!\!\!\!\mathcal L_{PG} \!\left(\pi^{(e,u)}_I, \tau^{(e)}\right) =     \frac{1}{T} \!\sum_{t=0}^{T-1}(1-I_t)\hat A^{\pi_I^{(e)}}_t\log \! \pi^{(e,u)}_{\theta}\!(\mathbf a_t|\mathbf s_t) \label{eq: ia_pg_loss}
\end{align}
where $A^{\pi_I^{(e)}}_t = A^{\pi_I^{(e)}}\!\!(\mathbf s_t, \mathbf a_t)$.
Similar to the non-intervention assisted case, this loss function is designed to have a gradient that approximates the analytical gradient presented in \cref{eq:iapg}.  However, to reduce variance, IA-PG employs the advantage function estimate $\hat A^{\pi_I^{(e)}}$  instead of the state-action value estimate $\hat Q^{\pi_I^{(e)}}$. It's important to note that this loss function does not incorporate any constraints on the policy updates. This lack of restriction can potentially lead to performance degradation issues.  

Our next algorithm, the Intervention-Assisted PPO (IA-PPO) algorithm, is designed to prevent such performance degradation. The IA-PPO algorithm builds upon the bound presented in  \cref{thm: ia_pib} to derive a loss function that resembles the original PPO algorithm \cite{schulman2017a}.  Recall that iteratively minimizing the the right-hand side of \cref{eq: thmbound} leads to a monotonically improving sequence of policies with respect to the average cost objective. Since only the actor network parameters $\theta$ change between updates, the bound in \cref{eq: thmbound} suggests solving the following optimization problem for updates:
\begin{align} \label{eq: tr_update}
    & \theta^{(e,u+1)} = 
    \underset{\theta}{\text{argmin}} \Bigg \{ \underset{\substack{\mathbf s\sim d(\pi_I^{(e)})\\ \mathbf a \sim \pi_I^{(e)}(\cdot|\mathbf s)}}{\mathbb E}\left [((1-I(\mathbf s))A^{\pi_I^{(e)}}\!\!(\mathbf s, \mathbf a)\mathcal \mathcal R^{(e,u)}_{(e)}(\mathbf a|\mathbf s)\right] + \mathcal D(d({\pi_I^{(e,u)}}), d({\pi_I^{(e)}})) \Bigg\}
\end{align}
\noindent where: $\mathcal R^{(e,u)}_{(e)}(\mathbf a|\mathbf s)=\frac{\pi_\theta^{(e,u)}(\mathbf a|\mathbf s)}{\pi_\theta^{(e)}(\mathbf a|\mathbf s)}$.

Notice, \cref{eq: tr_update} omits the second term in \cref{eq: thmbound} in as it does not depend on the variable $\theta$.  Minimizing the first term of \cref{eq: tr_update} encourages maximizing the ratio $\mathcal R^{(e,u)}_{(e)}(\mathbf a|\mathbf s)$ for negative advantages, and encourages minimizing $\mathcal R^{(e,u)}_{(e)}(\mathbf a|\mathbf s)$ for positive advantages.  However, the second term penalizes large deviations between policies.  As $\pi_0$ and $I(\mathbf s)$ remain unchanged between updates, the primary factor  influencing this term is the difference between the actor policies between updates. To this end, the IA-PPO algorithm uses the following clipped loss:
\begin{align} \label{eq: ppo_loss}
    \!&\mathcal L_{clip}(\pi_I^{(e,u)},\tau^{(e)})= \frac{1}{T}\sum_{t=0}^{T-1}(1-I(\mathbf s_t))  \max\{A^{\pi^{(e)}_{I}}_t \mathcal \mathcal R^{(e,u)}_{(e)}(\mathbf a_t|\mathbf s_t),\text{clip}(\epsilon, \hat A^{\pi_I^{(e)}}_t)
\end{align}
where $\epsilon\in (0,1)$ is a hyperparameter and
$$\text{clip} (\epsilon,A) = \begin{cases}
    (1+\epsilon) A, & A \geq 0 \\
    (1-\epsilon) A, & A < 0
\end{cases}$$
This clipped loss function creates more conservative updates by attempting to limit the divergence of policies between updates while still increasing (decreasing) the likelihood of actions that decrease (increase) the advantage. This focus on conservative updates is even more critical in online training compared to simulation-based training. Online training relies on a single sample path generated from the previous trajectory's end state, resulting in finite-length trajectories. This limitation leads to inherently noisier and potentially more biased advantage function estimates  compared to settings where multiple trajectories are generated from various starting states (simulation-based training). The clipped loss function helps to mitigate the impact of this noise and bias on policy updates.

\subsection{Critic Network and Advantage Estimation}
Both loss functions $\mathcal L_{PG}$ (\ref{eq: ia_pg_loss}) and $\mathcal L_{clip}$ (\ref{eq: ppo_loss}) depend on an estimate of the advantage function with respect to the intervention-assisted policy $\hat A^{\pi_I}(\mathbf s, \mathbf a)$. To obtain this estimate, our implementation utilizes a critic network in addition to an average cost variant of the Generalized Advantage Estimation (GAE) algorithm \cite{schulman2018}.  The critic network is a separate value NN trained to approximate the state-value function $V^{\pi_I}(\mathbf s)$. The critic estimator is notated as $V_\phi$ where $\phi$ notates the parameters of the critic network. To overcome the \textit{value drifting problem} that occurs in average reward/cost-based critic networks, we leverage the Average Value Constraint method as detailed in \cite{ma2021}.  For average cost problems, the critic network is updated via gradient descent to minimize the following loss function:
\begin{equation}
    \mathcal L_{val}(V_{\phi}, \hat V^{\pi_I}) = \frac{1}{T}\sum_{t=0}^{T-1}(\frac{1}{2}(V_{\phi(\mathbf s_t)}-\hat V(\mathbf s_t)+\nu b)^{2})
\end{equation}
where $\hat V^{\pi_I}(\mathbf s_t)$ is the target value for state $\mathbf s_t$, $b$ is an estimate of critic networks bias, and $\nu$ is a hyperparameters controlling the degree of bias correction. This average-cost critic is used in conjunction with an average cost-variant of the GAE algorithm to estimate $\hat A^{\pi_I}(\mathbf s_t, \mathbf a_t)$ for $(\mathbf a_t, \mathbf s_t)\in \tau$ where $\tau$ was generated using $\pi_I$. Details can also be found in \cite{ma2021}. 

A crucial distinction exists between training the actor $\pi_\theta$ and critic network $V_\phi$ for intervention-assisted policies. The policy loss functions $\mathcal L_{PG}$ and $\mathcal L_{clip}$ only directly utilize experience samples where interventions did not occur ($I_t=0$). In contrast, the critic loss function incorporates all experience samples. This is because the critic network estimates the state-value function of the intervention-assisted policy $V^{\pi_I}(\mathbf s)$ and not just the actor policy $V^{\pi_\theta}(\mathbf s)$. 

Algorithm 1 provides an outline of the IA-PG and IA-PPO algorithms as actor-critic style algorithms. The only difference between the two algorithm is the computation of $\mathcal L_{pol}$ in line 12, as the IA-PG algorithm uses \cref{eq: ia_pg_loss} while the IA-PPO uses \cref{eq: ppo_loss} for the policy loss.  The algorithm as written assumes that $\mathcal S_\theta$ and $\mathcal S_0$ have been pre-determined.  The next section details how these regions were selected for the results shown in \Cref{sec: experiments}.

\begin{algorithm}
\caption{Intervention-Assisted PG/PPO Algorithm}
\begin{algorithmic}[1]
\For{each epoch $e = 1, E$}
    \State \# \textit{Policy Rollout Phase}
    \State Initialize an empty trajectory buffer $\tau$
    \For{each step $t = 0,1 ..., T_e-1$}
        \State Observe state $\mathbf s_t$ and compute $I_t=\mathbf{1}(\mathbf s_t \in \mathcal S_0)$
        \State Sample action $\mathbf a_t\sim \pi_I(\cdot|\mathbf s_t)$
        \State Execute action $\mathbf a_t$, observe cost  $c_t$ and next state $\mathbf s_{t+1}$
        \State Store transition $(\mathbf s_t, I_t,,\mathbf a_t, c_t, \mathbf s_{t+1})$ in $\tau$
    \EndFor
    \State \# \textit{Update Phase}
    \State Estimate advantages $\hat{A}^{\pi_I^{(e)}}(s_t, a_t)$ $\forall$ $(a_t, s_t)\in \tau$ 
    \For{each update epoch $u = 1, U$}
        \State Compute policy loss $\mathcal L_{pol}$
        \State Compute value loss $\mathcal L_{val}$
        \State Update policy parameters: $\theta\gets \theta - \alpha \nabla_\theta L_{pol}$
        \State Update critic parameters function:  $\phi \gets \phi - \alpha \nabla_\phi L_{val}$
    \EndFor
\EndFor

\end{algorithmic}
\end{algorithm}

 \subsection{Learning Region Selection}\label{sec: lr_selection}
To achieve sample efficient learning, the finite learning region $\mathcal S_\theta$ should be specified to minimize the amount of interventions. To this end, we can leverage \Cref{thm: drift_stability} to ensure that interventions not only stabilize the network, but also push the network state back towards $\mathcal S_\theta$ in expectation.  Given a strongly-stable intervention policy $\pi_0$, according to \Cref{thm: drift_stability}, there exists a bounded sub-region $\mathcal S_1\in\mathcal S$ such that all states $\mathbf s\notin \mathcal S_1$ have negative expected drift, or more specifically:\begin{align}
    \mathbb E_{\pi_0}[\Phi(\mathbf s_{t+1})-\Phi(\mathbf s_t)\mid \mathbf s_t\notin \mathcal S_1] \leq -(\bar q_t+1)
\end{align}
Setting $\mathcal S_\theta=\mathcal S_1$ would ensure that each intervention results in negative expected drift, effectively pushing the state Markov chain back to $\mathcal S_\theta$ once it leaves.  If $\mathcal S_1$ is not known beforehand, it can be estimated by producing a trajectory using only $\pi_0$. In practice, it may be very difficult to estimate $\mathcal S_1$ exactly as it requires learning the relationship between high-dimensional state-space and the expected drift.  To address this challenge, we aim to learn a superset  $\mathcal S_g \supseteq \mathcal S_1$ where $\mathcal S_g$ can be estimated using a lower-dimensional representation of the states.   To this end, we use the following corollary:

\begin{cor} \label{cor: g_bound}
Given an strongly stable policy $\pi$ and a convex function $g:\mathcal S\mapsto [0,\infty)$, we can bound the expected  drift conditioned on $g(\mathbf s_t)$ $\forall \; \mathbf s_t$ as:
\begin{equation}
    \mathbb E_{P_\pi}[\Phi(\mathbf s_{t+1})-\Phi(\mathbf s_t)|g(\mathbf s_t)] \leq -(1+\bar q_t) + B_g\mathbf 1_{\mathcal S_g}(\mathbf s_t)
\end{equation}

where $\mathcal S_g = \{\mathbf s'\in \mathcal S: g(\mathbf s') \leq \max_{\mathbf s\in \mathcal S_1}g(\mathbf s)\}$ and $B_g$ is a constant.   
\end{cor}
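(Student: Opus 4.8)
The plan is to recognize \Cref{cor: g_bound} as a restatement of the drift condition of \Cref{thm: drift_stability} in which the bounded exception region $\mathcal S_1$ is replaced by the sublevel set $\mathcal S_g$. The single fact that drives everything is the inclusion $\mathcal S_1\subseteq \mathcal S_g$. To establish it I would set $M_g=\max_{\mathbf s\in \mathcal S_1}g(\mathbf s)$, which is finite and attained because $\mathcal S_1$ is a finite region on which $g$ is real-valued; then for every $\mathbf s\in \mathcal S_1$ we have $g(\mathbf s)\leq M_g$, so $\mathbf s\in \mathcal S_g$ by the definition $\mathcal S_g=\{\mathbf s':g(\mathbf s')\leq M_g\}$, and taking complements yields $\mathcal S\setminus \mathcal S_g\subseteq \mathcal S\setminus \mathcal S_1$.

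With this inclusion in hand I would split on membership in $\mathcal S_g$. For any $\mathbf s_t\notin \mathcal S_g$ the inclusion forces $\mathbf s_t\notin \mathcal S_1$, so \Cref{thm: drift_stability} applies with its own indicator equal to zero and gives $\mathbb E_{P_\pi}[\Phi(\mathbf s_{t+1})-\Phi(\mathbf s_t)\mid \mathbf s_t]\leq -(1+\bar q_t)$; this is exactly the claimed bound since $\mathbf 1_{\mathcal S_g}(\mathbf s_t)=0$ there. For $\mathbf s_t\in \mathcal S_g$ I would bound the original indicator crudely by $\mathbf 1_{\mathcal S_1}(\mathbf s_t)\leq 1$, so \Cref{thm: drift_stability} yields $\mathbb E_{P_\pi}[\Phi(\mathbf s_{t+1})-\Phi(\mathbf s_t)\mid \mathbf s_t]\leq -(1+\bar q_t)+B$. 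Choosing $B_g=B$, the two cases combine into $-(1+\bar q_t)+B_g\mathbf 1_{\mathcal S_g}(\mathbf s_t)$ for all $\mathbf s_t$, which is the assertion. If a region-specific constant is preferred, one can instead invoke \Cref{lem: finite_drift} on the set $\mathcal S_g$ and absorb the bounded quantity $1+\bar q_t$ into $B_g$.

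The remaining care concerns the conditioning on $g(\mathbf s_t)$ rather than on $\mathbf s_t$ itself. Because $\mathbf 1_{\mathcal S_g}(\mathbf s_t)=\mathbf 1[g(\mathbf s_t)\leq M_g]$ is a function of $g(\mathbf s_t)$ alone, the case split is measurable with respect to this coarser conditioning, and I would obtain the conditional statement by the tower property, averaging the per-state bound over all states sharing a common value of $g(\mathbf s_t)$; when $g(\mathbf s_t)>M_g$ every such state lies outside $\mathcal S_1$, so the negative-drift bound survives the averaging.

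The step I expect to be the real obstacle is not the drift inequality, which follows immediately from $\mathcal S_1\subseteq \mathcal S_g$, but the boundedness of $\mathcal S_g$, since boundedness is what makes $\mathcal S_g$ usable as a finite learning region $\mathcal S_\theta$ in \Cref{thm: pi_i} and is exactly what \Cref{lem: finite_drift} requires. Convexity of $g$ alone only guarantees that $\mathcal S_g$ is convex; to conclude that the sublevel set is genuinely bounded I would additionally rely on $g$ being coercive on $\mathcal S$ (that is, $g(\mathbf s)\to\infty$ as $\bar q\to\infty$), a property natural for the candidate functions $g$ and one that turns $\mathcal S_g$ into a compact exception region.
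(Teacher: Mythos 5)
Your proof is correct and is essentially the argument the paper intends: the corollary is stated without an explicit proof, but the surrounding text makes clear it rests on precisely your key step, the inclusion $\mathcal S_1\subseteq \mathcal S_g$ (immediate since $\mathcal S_g$ is the sublevel set of $g$ at level $\max_{\mathbf s\in\mathcal S_1}g(\mathbf s)$, which is finite and attained on the bounded region $\mathcal S_1$), after which states outside $\mathcal S_g$ inherit the negative drift of \Cref{thm: drift_stability} and the constant $B_g$ absorbs the drift on $\mathcal S_g$ exactly as you do, either by taking $B_g=B$ or via \Cref{lem: finite_drift}. Your closing caveats refine rather than depart from this argument and are well taken: the coarser conditioning on $g(\mathbf s_t)$ is legitimately handled by the tower property because $\mathbf 1_{\mathcal S_g}(\mathbf s_t)$ is measurable with respect to $g(\mathbf s_t)$, and convexity of $g$ alone indeed does not make $\mathcal S_g$ bounded --- the paper's operative choice $g(\mathbf s)=\bar q$ is coercive in the queue coordinates (with finitely many link states), which is what actually secures the boundedness needed for $\mathcal S_g$ to serve as the learning region in \Cref{thm: pi_i}.
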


This corollary ensures that if $\mathcal S_g$ is known, the expected drift for $\mathbf s \notin \mathcal S_g$ is negative. Letting $g(\mathbf s_t)=\bar q_t$ means $\mathcal S_g$ is defined based off the network backlog and we only need to estimate a $\bar q^{*}$ such that: 
\begin{equation}
    \mathbb E_{P_\pi}[\Phi(\mathbf s_{t+1})-\Phi(\mathbf s_t)|\bar q_t, \bar q_t > \bar q^{*}] \leq -(1+\bar q_t)
\end{equation}
 This quantity $\bar q_t^{*}$ is easier much easier to estimate compared to the exact region $\mathcal S_1$.  Once $\bar q_t^*$ is estimated, the intervention criteria can be defined as: 
 \begin{equation}
    I(\mathbf s_t) = \begin{cases}
        0, & \bar q_t \leq \bar q^* \\
        1, & \bar q_t > \bar q^*
    \end{cases}
\end{equation}
Note that under this criteria, $\mathcal S_\theta$ remains bounded thus the intervention assisted policy is strongly stable given that $\pi_0$ is strongly stable, and the expected drift given $I(\mathbf s_t)=1$ is negative. Note that $\bar q_t$ only contains partial information about the high-dimensional state $\mathbf s_t = (\mathbf q_t, \mathbf y_t)$ as it neglects all the information on the link states $\mathbf y_t$ in addition to averaging the information over the queue state $\mathbf q_t$.  As a result, $\mathcal S_\theta$ isn't minimal in the sense that it can contains some states such that $\mathbb E_{P_{\pi_0}}[\Phi(\mathbf s_{t_1})-\Phi(\mathbf s_t)|\mathbf s_t] <0$, but in practice we have found this backlog based intervention criteria a good strategy for sample efficient learning as long as we use a pessimistic estimate of $\bar q^*$. 

\subsubsection{Intervention threshold estimation} \label{sec: int_est}
To identify $\bar q^*$, we estimate   $\hat q^* = \min \{\bar q: \Delta_{\pi_0}(\bar q') < \omega, \forall \bar q' > \bar q \}$ where $\omega < 0$ is a negative constant that helps ensure our estimate is pessimistic in the sense that $\hat q^* > \bar q^*$. This process involves generating trajectory $\tau_0$ using only $\pi_0$,.  The length of this trajectory $T_0$ must be sufficiently long to characterize the relationship between $\bar q_t$ and $\Delta_{\pi_0}(\bar q_t)=\mathbb E_{\pi_0}[\Phi(\mathbf s_{t+1}) - \Phi(\mathbf s_t)|\bar q_t]$. We dynamically set $T_0$ to allow the state Markov chain to reach a steady-state distribution which is determined by the convergence of the time-averaged queue backlog. From the resulting trajectory, we compute $\Phi(\mathbf s_t)$ and $\delta(\mathbf s_t)=\Phi(\mathbf s_{t+1})-\Phi(\mathbf s_t)$ for $t=0,1, ..., T_0-1$.  Let $C(\bar q)$ be the number of times $\bar q_t=\bar q$, the point estimate of $\Delta_{\pi_0}(\bar q)$ is:
\begin{equation}
   \widehat{\Delta_{\pi_0}(\bar q)} =\frac{1}{C(\bar q)} \sum_{t=0}^{T_0-1} \delta(\mathbf s_t)\mathbf 1_{(\bar q_t = \bar q)}(\mathbf s_t)
\end{equation} 
This point-estimate may be noisy as $C(\bar q)$ is random, and backlogs with the highest counts will be aggregated around the average backlog over the trajectory.  As a result, the threshold estimator
\begin{equation}
    \hat q^* = \max\{\bar q: \widehat{\Delta_{\pi_0}(\bar q')} > \omega,; \forall,; \bar q' < \bar q  \}
\end{equation} 
may be overly pessimistic, as $\widehat{\Delta_{\pi_0}(\bar q)}$ for large $\bar q$ will be very noisy as the counts are low.  Note, that the counts are also low for very small $\bar q$ and the estimator will also have a high variance, but this has less of an effect on estimating $\bar q^*$ because we are taking an argmax. To improve the estimator $\hat q^*$, we drop the largest $5\%$ of estimates $\widehat{\Delta_{\pi_0}(\bar q)}$ , and then run a weighted moving average of length $10$ over the remaining point estimates with the counts corresponding to the weights.  \Cref{fig:SA4_drift0} shows an example of the point vs the weighted estimates of $\widehat{\Delta_{\pi_0}(\bar q)}$ for the SH2 network example using $\omega=-0.1$.  The backlog intervention threshold is estimated as  $\hat q^*_{weighted}= 22$ using the weighted estimator, and $\hat q_{point}^{*}=44$ using the point estimator.  This would have a negative impact on the sample efficiency of our intervention-assisted methods, as the region $\mathcal S_\theta$ would be much larger than $\mathcal S_1$. 
\begin{figure}[H]
    \centering
    \includegraphics[width=0.5\linewidth]{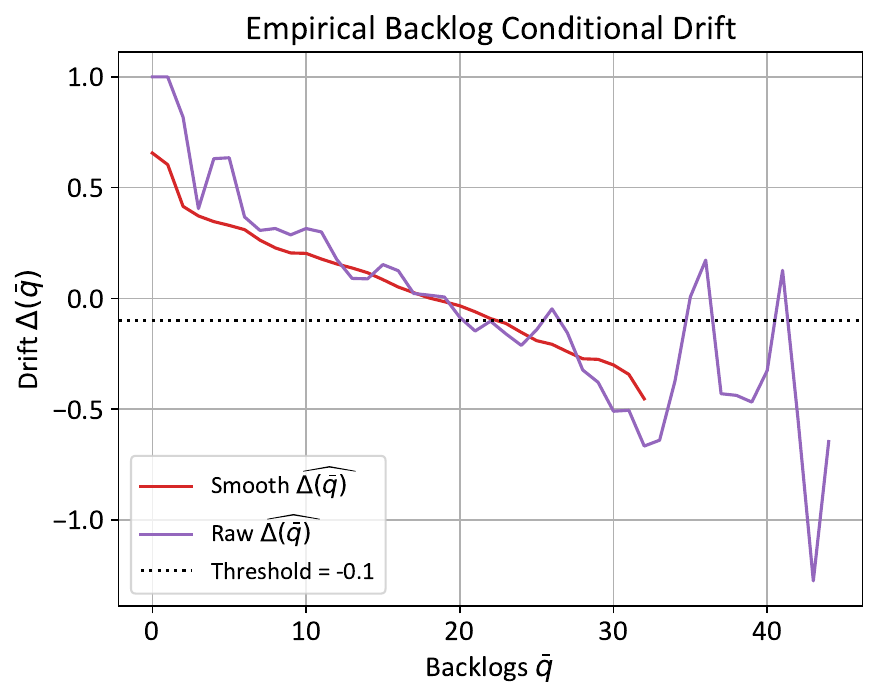}
    \caption{The estimated drift $\widehat{\Delta(\cdot)}$as a function of the queue backlog $\bar q$ for the MaxWeight policy on the SH2 network. The `Smooth' estimate uses a weighted moving average while the `Raw' estimate does not. The dotted line corresponds to an intended minimum drift of $\omega=-0.1$. }
    \label{fig:SA4_drift0}
\end{figure}
Once $\hat q^*$ is obtained, then the full intervention-assisted policy $\pi_I$ can be used to generate all future trajectories. 
\subsubsection{Updating Intervention Thresholds}
The initial estimate $\hat q^*$ is usually sufficient to allow for the optimized performance of the intervention-assisted policy to outperform the measured baselines.  However, it may be desirable to increase $\hat q^*$ as the agent learns an improved policy such that the agent no longer needs interventions.  However, increasing $\hat q^*$ linearly with $t$ may be a problem as increasing $\mathcal S_\theta$ without appropriately learning how to take optimal decisions near the boundary beforehand can lead to the instability feedback loop as $\bar q_t$  ``chases" a linearly increasing $\hat q^*$.  This this end, we use the following update formula to update $\hat q^{*}_k$after each trajectory $\tau_k \sim \pi_I^{(k)}$
\begin{equation}
    \hat q_{k+1}^* = \hat q_k^* + \gamma (1-R(\tau_k))\mathbf 1_{(R(\tau_k) > R_{min})} 
\end{equation}
 Here $\gamma$ is a hyperparameter that controls the magnitude of the updates between each trajectory, $R(\tau_k)=\frac{1}{T}\sum_{t=0}^{T-1}I_t$ is the intervention rate over trajectory $\tau_k$, and $\mathbf 1_{R(\tau_k) > R_{min}}$ is an indicator function that evaluates to $1$ when the intervention rate is greater than some hyperparameter $R_{min}\in [0,1]$. The indicator function is included to prevent $\hat q_{k+1}^*$ from growing arbitrarily large when no interventions are occurring as this corresponds the agent not visiting states near the boundary of $\mathcal S_\theta$.

\section{Experiments} \label{sec: experiments}
We conducted a series of experiments to evaluate the IA-PG and IA-PPO algorithms. The following SQN environments were used in the experiments:
\begin{enumerate}
    \item \textbf{SH1}: A two user $(K=2)$ single-hop wireless network.   
    \item \textbf{SH2}: A four user $(K=4)$ single-hop wireless network. The topology is shown in \Cref{fig:SH2}.
    \item \textbf{MH1}: A multihop environment with two classes $(K=2)$, six links $(M=6)$, and four nodes $(N=4)$.  The topology is shown in \Cref{fig:MH1}. 
    \item \textbf{MH2}: A multihop environment with four classes $(K=4)$, thirteen links $(M=13)$,  and eight nodes $(N=8)$. The topology is shown in \Cref{fig:MH2}.  
\end{enumerate}
\begin{figure}
    \centering
    \includegraphics[width=0.6\linewidth]{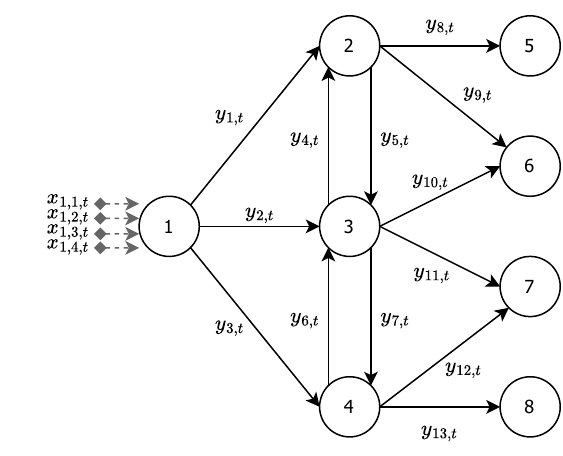}
    \caption{\textbf{(MH2)}: Node 1 is the arrival node for all four classes of packets. Nodes 5,6,7,8 are the destination nodes for classes 1,2,3, and 4 respectively. }
    \label{fig:MH2}
    \vspace{-10pt}
\end{figure}
The arrival and service distributions for each SQN environment are found in the Appendix. 

\subsection{Comparison Algorithms}

We evaluate the performance of all learning algorithms against the MaxWeight algorithm for single-hop network instances and Backpressure for the multi-hop network instances.  In addition to these classic network control algorithms, we evaluate the performance of the following DRL algorithms developed for average-reward tasks: 
\begin{enumerate}
    \item \textbf{Average Cost PPO (AC-PPO)}\cite{ma2021}: an average-cost variant of the original PPO algorithm that does not leverage interventions.
    \item \textbf{Stability then Optimality PPO (STOP-PPO)} \cite{pavse2023}: an average reward policy gradient algorithm designed for environments with unbounded state-spaces.  STOP-PPO utilizes reward shaping to first train the agent to learn how to stabilize the queuing network before learning how to optimize the queuing network.  Our variant differs from the original as it includes the PPO clipping mechanism in the policy loss function and utilizes the Average Value Constraint method to control the bias of the critic network.  
\end{enumerate}

\subsection{ODRLC Experiment Procedure}

The following online-training process akin to an ODRLC setting for all algorithms (IA-PG, IA-PPO, AC-PPO, and STOP-PPO). The agent interacts continuously with the SQN environment from $t=0$ until a long-time horizon $T_{end}$.  The performance of the agent is monitored over the entire long trajectory. We measure the following two metrics: the time-averaged backlog $\bar q_t^{(t)}=\frac{1}{t}\sum_{h=0}^{t-1}\bar q_h$ and $T_{MA}=10,000$ step moving average $q_t^{(MA)}=\frac{1}{T_{MA}} \sum_{h=t-T_{MA}}^{t-1}\bar q_t$. The moving average captured shorter-term performance, while the time-averaged metric assessed performance up to the current time step.

The experiment time horizon $T_{end}$ was divided into distinct episodes of length $T_e$. For the IA-PG and IA-PPO algorithms, the first $E_0$ episodes only the intervention policy $\pi_0$ was used. These trajectories $(\tau^{(0)}, \tau^{(1)}, ... \tau^{(E_O})$ were then used to estimate $\bar q_t^{*}$ and determine the learning and intervention regions $\mathcal S_\theta$ $\mathcal S_0$ according to the process outlined in \ref{sec: lr_selection}.  The performance of $\pi_0$ was measured and included in analysis of the intervention-assisted algorithm's performance.  After episode $E_0$, the full intervention-assisted policy $\pi_I^{(e)}$ is used to generate all future trajectories.  After each trajectory $\tau^{(e)}\sim \pi_I^{(e)}$ was generated, the actor network was updated $U$ times using the corresponding policy loss function.  The AC-PPO and STOP PPO algorithms the same training procedure, minus the initial learning region estimation phase meaning their actor policy $\pi_\theta$ generates all trajectories starting from $t=0$.  For all algorithms, the environment state is never reset. Additionally, for the IA-PG, IA-PPO, and AC-PPO algorithms, the cost shaping function $r'(\mathbf s_t)=\frac{-1}{1+\bar q_t}$ was used.  This cost shaping function ensures that the scales of costs are similar for different environments even if the backlog of the the respective optimal policies differ substantially, which allowed us to use the same learning rate for all environments as the magnitude of the gradients were comparable. We also used the symmetric natural log state transformation for all DRL algorithms to decrease the magnitude of divergence between inputs to the actor and critic networks \cite{pavse2023}.

All experiments were repeated five times for each algorithm using the same random seeds. This ensured identical arrival processes and link states across corresponding algorithms in each environment. All algorithms employed the Average Value Constrained Critic, with advantages estimated using an average-cost variant of the Generalized Advantage Estimation (GAE) algorithm. For consistency, identical hyperparameters were used across all environments for each algorithm if they shared hyperparameters.A detailed description of hyperparameters and network architectures can be found in the appendix.
\begin{figure*}[!htb]
    \centering
    \begin{subfigure}{\textwidth}
        \centering
        \includegraphics[width = 0.4\textwidth]{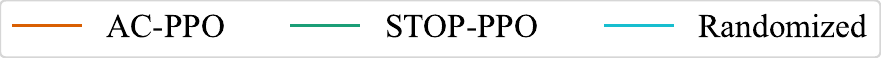}
    \end{subfigure}
    \begin{subfigure}[t]{0.45\textwidth}
        \centering
        \includegraphics[width = \textwidth]{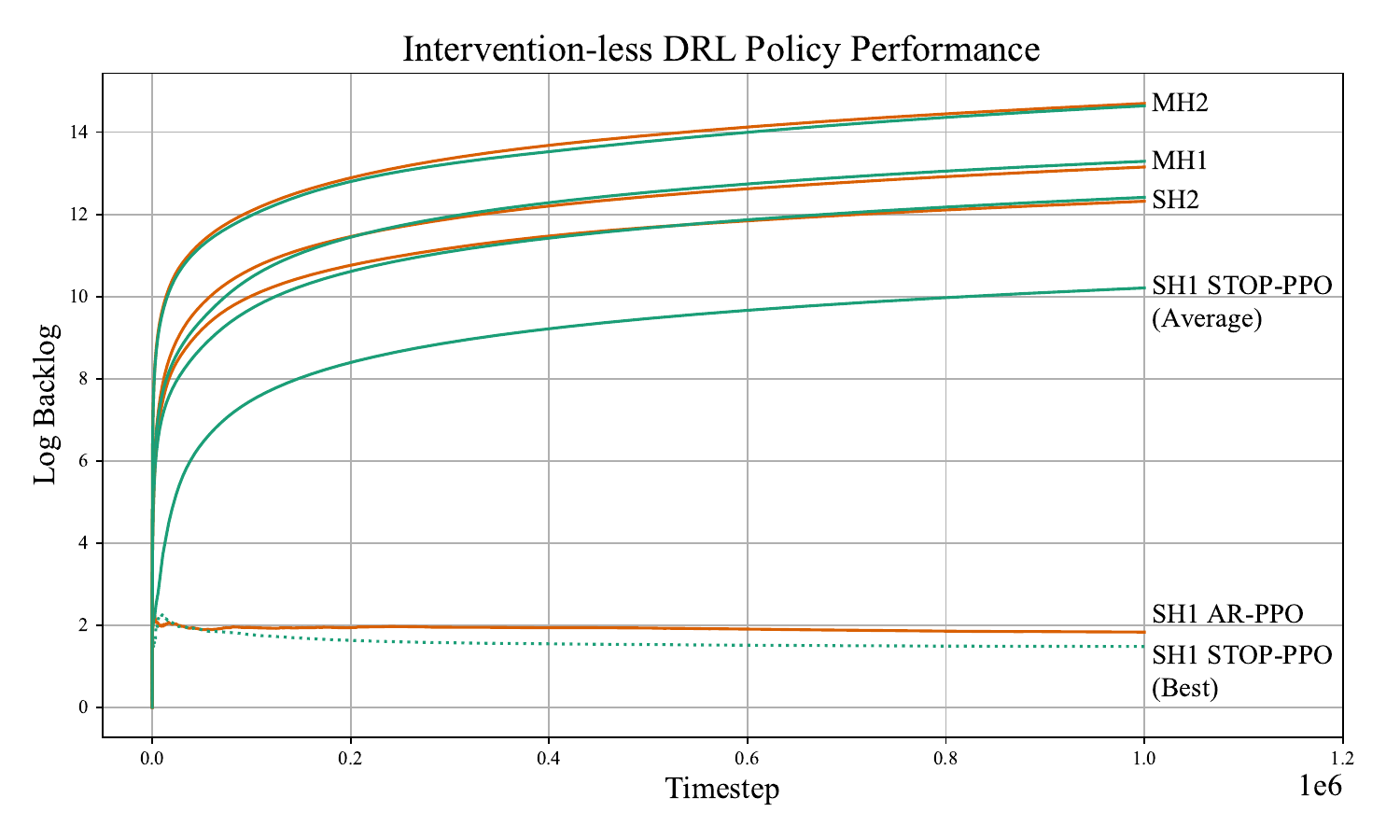}

    \end{subfigure}
    \begin{subfigure}[t]{0.45\textwidth}
        \centering
        \includegraphics[width=\textwidth]{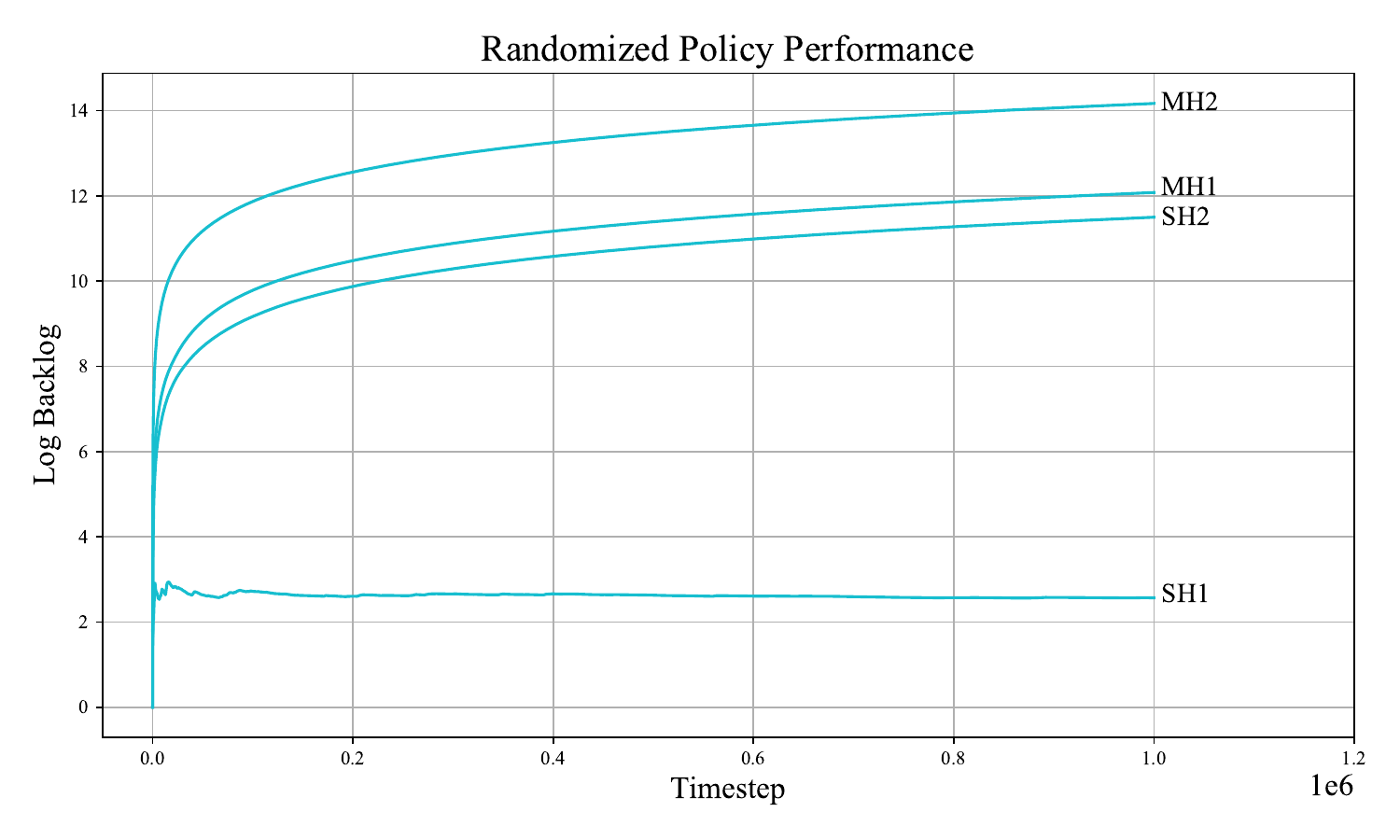}

    \end{subfigure}\vspace{-10pt}
\caption{Performance of the AC-PPO, STOP-PPO, and randomized policy on each network scenario.  The Y-axis represents the natural logarithm of the time-averaged backlog $\log(\bar q_t^{(t)})$. Each solid line represents an average over five seeds.} \label{fig: ac_stop_rand}
\end{figure*}
\subsection{Results}
\subsubsection{Intervention-less DRL Baselines}
We start by demonstrating how the intervention-less DRL algorithms struggle to stabilize the queues resulting in very poor performance on most environments.  The online-performance of the AC-PPO and STOP-PPO algorithms are shown in \Cref{fig: ac_stop_rand}.  For the SH2, MH1, and MH2 environments, neither the AC-PPO nor the STOP-PPO algorithm can can stabilize the queuing network resulting in the networks queue backlog growing without bounds.  For the SH1 environment, the AC-PPO algorithm was able to stabilize the queuing network for each seed while the STOP-PPO algorithm only stabilized the queuing network in three of the five seeds.   The variation in performance  between the SH1 network and the other network scenarios is best explained by examining the performance of a randomized policy on each network scenario.  Only in the SH1 network scenario does the randomized policy stabilize the queuing network.  The randomized policy performance is a good indicator of whether or not an intervention-less policy can work in the ODRLC setting as an untrained agent's initial policy is typically close to a randomized policy.  If this initial randomized policy is stable and the policy updates are conservative enough, as enabled by PPO-style updates, then its possible for the agent to avoid the extrapolation loop. However, its evident that an intervention-less approach to ODRLC will fail on many SQN control tasks.

\begin{figure*}[!htb]
    \centering

    \begin{subfigure}{\textwidth}
        \vspace{-6cm}
        \centering
        \includegraphics[width = 0.75\textwidth]{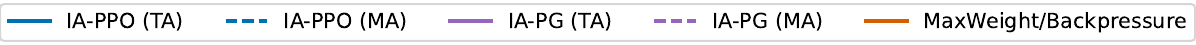}
    \end{subfigure}
    
    \begin{subfigure}{0.49\textwidth}
        \centering
        \includegraphics[width = \textwidth]{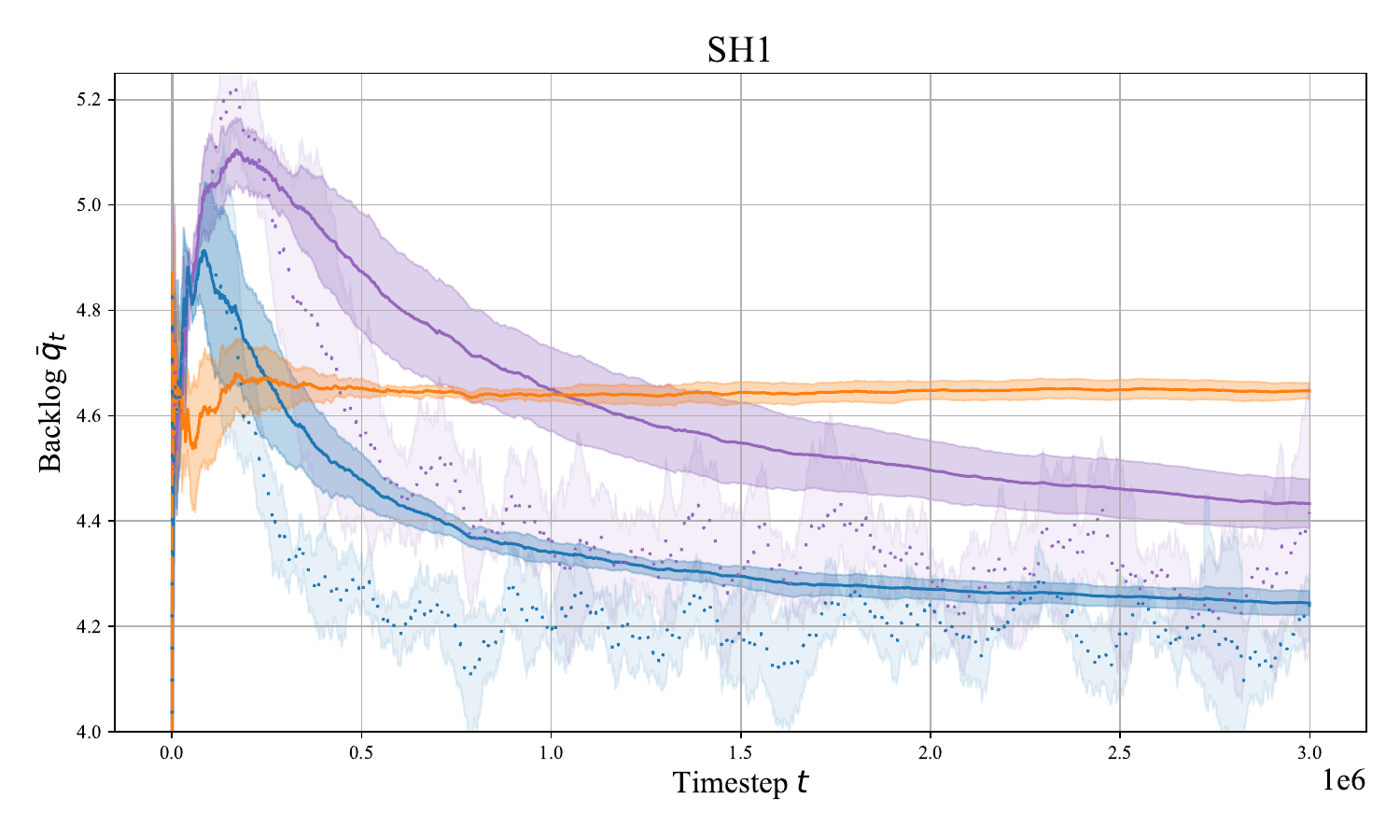}
    \end{subfigure}
    \begin{subfigure}{0.49\textwidth}
        \centering
        \includegraphics[width=\textwidth]{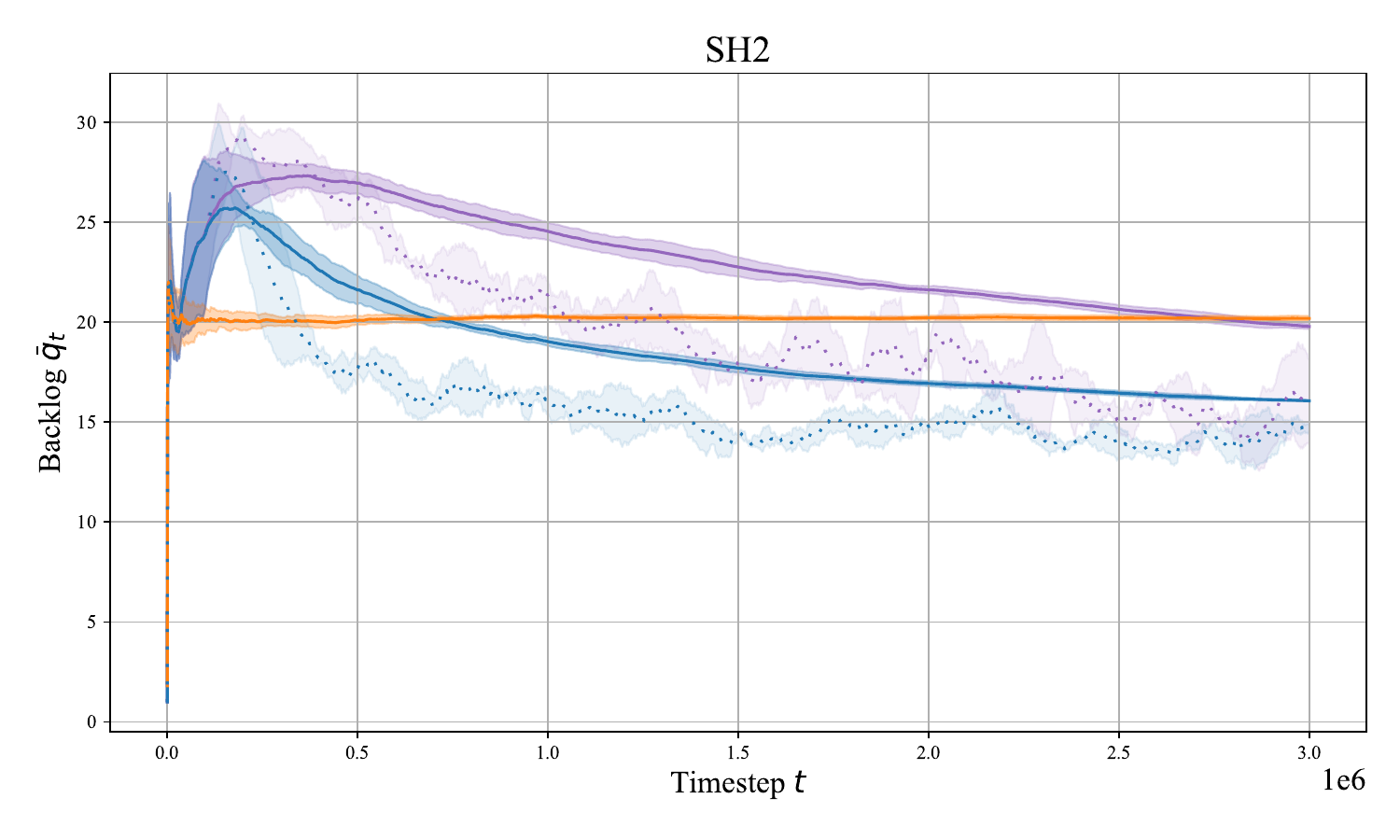}
    \end{subfigure}
    \begin{subfigure}{0.49\textwidth}
        \centering
        \includegraphics[width=\textwidth]{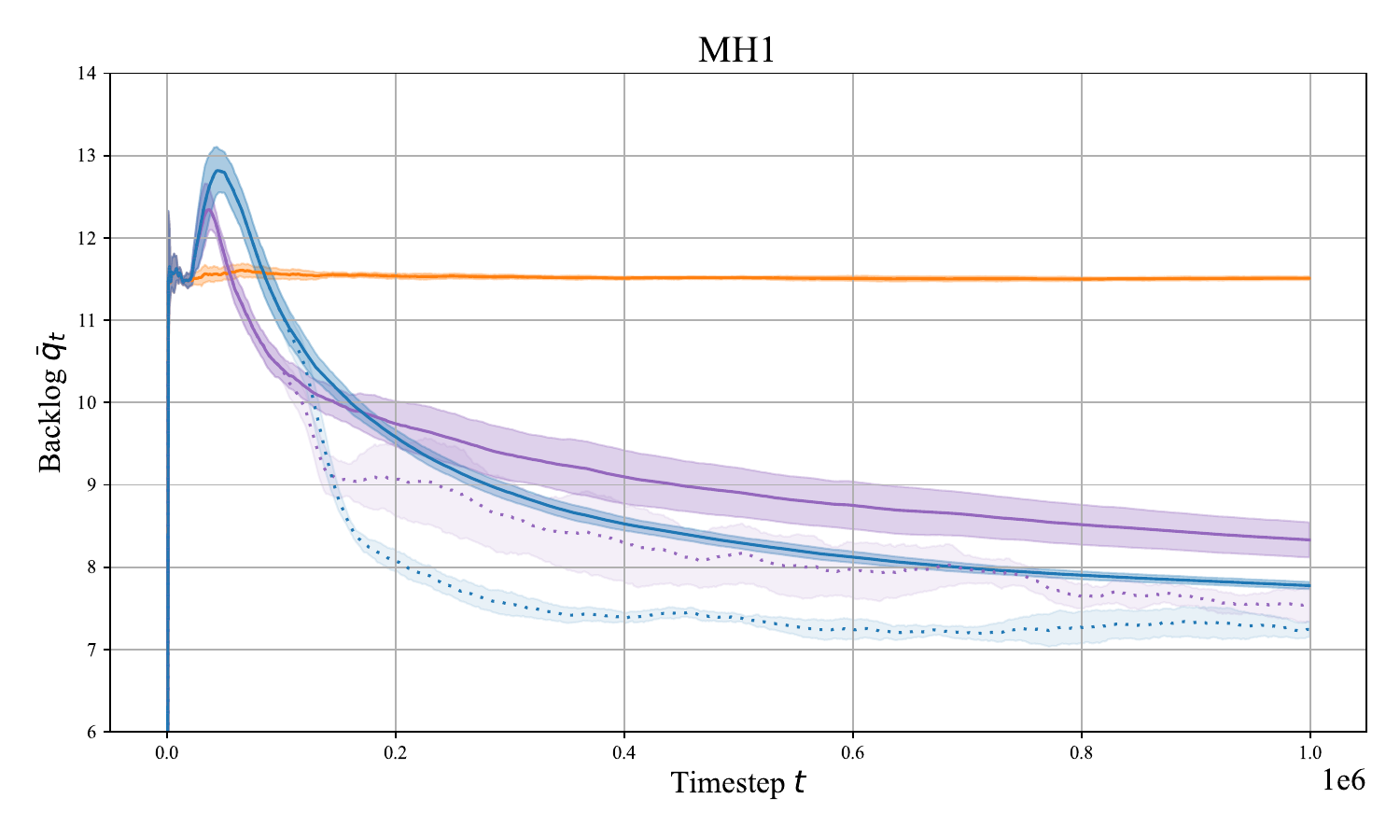}
    \end{subfigure}
    \begin{subfigure}{0.49\textwidth}
        \centering
        \includegraphics[width=\textwidth]{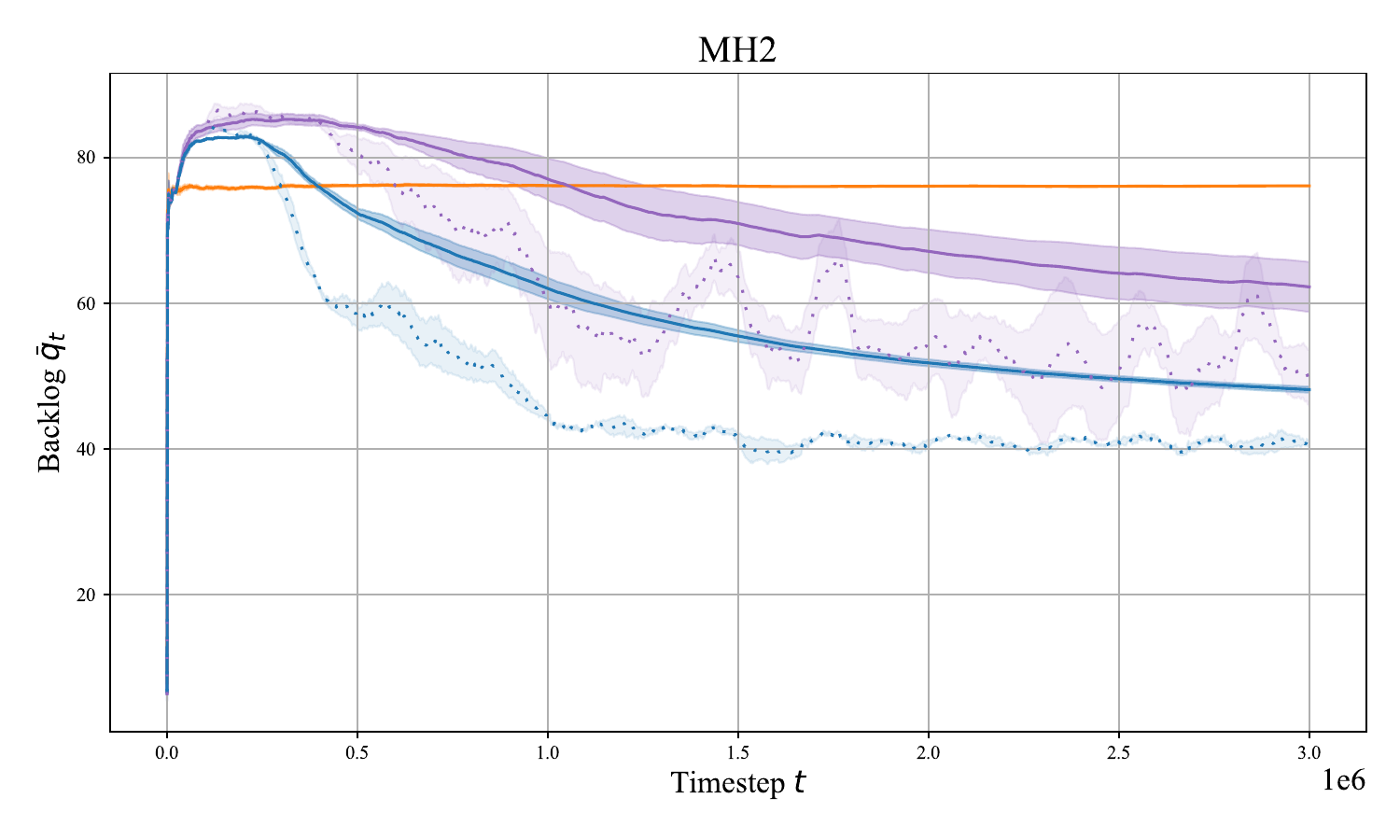}
    \end{subfigure}
    \vspace{-10pt}
    \caption{Performance of the IA-PG, IA-PPO, and MaxWeight/Backpressure algorithms on each environment.  The Y-axis represents the actual queue backlog $\bar q_t$.  Each line represents an average over five seeds. The solid lines correspond with the time-averaged backlog metrics $\bar q_t^{(t)}$ and the dashed lines correspond with a $T_{MA}=10,000$ step moving average $\bar q_t^{(T_{MA})}$. We report the $95\%$ confidence intervals for all performance metrics.} \label{fig: IA_results}
\end{figure*}
\subsubsection{Intervention-Assisted Algorithms}
Now that we have established the necessity of intervention-assisted methods for online-training of queueing network control algorithms we demonstrate that the IA-PG and IA-PPO algorithms can learn a better policy than classical network control algorithms online.   These results are shown in \Cref{fig: IA_results}.  Since the intervention-less DRL approaches failed in a majority of the environments, we focus on the comparison between the intervention-assisted algorithms and the classical network control algorithms as a baseline.   In all environments, the time-averaged backlog of the intervention-assisted algorithms outperforms the non-learning baseline. It is also evident that the IA-PPO algorithm is more sample-efficient than the  IA-PG algorithm.  This is best seen by the average rate at which $\bar q_t^{(T_{MA})}$ drops below the time-averaged backlog of the non-learning baseline in each environment.  It can also be seen that the moving average backlog $\bar q_t^{MA}$ of the IA-PPO algorithm is less noisy than that of IA-PG, especially for the SH2 and MH2 results. The SH2 and MH2 environments were also the more challenging environments as it took approximately 300,000 timesteps before $\bar q_t^{(MA)}$ of the IA-PPO algorithm was less than that of the MaxWeight/Backpressure policies, wheare is took closer to 100,000 timesteps to accomplish the same in the SH1 and MH1 environments.  These environments had a higher-dimensional state-space compared to the SH1 and MH1 environments. Also, it can be inferred that the effective state-space in which the agent's encountered is much larger as seen by the maximum of $q_t^{(MA)}$ encountered over the experiments length.

\section{Conclusion}
In conclusion, this work introduces a novel intervention-assisted policy gradient approach for enabling Online Deep Reinforcement Learning Controls (ODRLC) in stochastic queuing networks. Our methods, IA-PG and IA-PPO, merge classical control's stability with neural networks' adaptability, showing superior queue stability and network optimization in real-time over traditional methods. Experiments confirm our framework's effectiveness, overcoming unbounded queue challenges and setting a theoretical groundwork for future DRL applications in complex systems.

Future efforts will refine intervention mechanisms, explore scalability, and extend our framework to other domains with similar issues. This research paves the way for integrating traditional control and modern machine learning for advanced system optimization and control.

\bibliographystyle{ACM-Reference-Format}
\bibliography{mobihoc_references}
\appendix

\section{Neural Network Architectures and Policy Parameterization}

For the actor and critic networks, a Multilayer Perceptron (MLP) served as the foundational architecture, featuring two hidden layers, each with a width of 64 neurons. The policy gradient algorithms we use in this work require parameterized stochastic policies. Stochastic policies are essential in the learning process as they enable on-policy exploration when generating trajectories.  These stochastic policy distributions must also be parametric to allow for approximation via neural networks.   However, the selection of suitable stochastic policy distributions is crucial, particularly given that our action sets vary depending on the state. For single-hop scheduling problems, where the action space is small and discrete, we can use a categorical policy architecture and can enforce the valid flow constraints via action masking.    

Conversely,  Multihop network control problems have a high-dimensional discrete action space and complex state-dependent flow constraints that cannot be encoded using traditional stochastic policy parameterizations.   To address these challenges, we developed a novel multinomial stochastic policy parameterization.  This approach effectively incorporates the Multihop flow constraints into a high-dimensional discrete stochastic policy.

To illustrate the challenging constraints for Multihop Multiclass networks consider the network shown in \Cref{fig:MH1}. This network has two class $(K=2)$  and six links  $(M=6)$. For the MH1 network,  network flow decision at time $t$ is represented as a $(6,3)$ matrix $\mathbf A_t$, with $A_{m, k+1, t}$ denoting the capacity allocated to class $k$ on link $m$ and $A_{m,1,t}$ denotes the amount of capacity left unused on link $m$ at step $t$.  For each time step, $\mathbf A_t$ must satisfy the capacity constraints 
\begin{align} \label{eq: link_const}
  &\sum_{k=1}^{K+1}A_{m,k,t} = y_{m,t}  \quad  \forall \quad m\in 1,2,..., M
\end{align}   where $y_{mt}$ is the current capacity of link $m$. Note that these constraints are row-wise constraints over the link flow $\mathbf A_{m,t}$ and this cannot be expressed as individual constraints over each element in $\mathbf A_t$.   
 
 Existing policy gradient algorithms often utilize multivariate normal or beta distributions for high-dimensional control tasks. However,  these distributions cannot be used to enforce row dependant constraints such as \cref{eq: link_const}. Both of these distributions can be used to to enforce element-wise constraints, but cannot enforce the row-wise constraints required for Multihop control problems.  Furthermore, both distributions require continuous-to-discrete conversion for packet allocation, introducing potential discretization errors. To address these limitations, we propose a novel multinomial policy parameterization. This approach effectively captures the high-dimensional nature of the action space while seamlessly enforcing the true constraints dictated by the link-flow constraints.

\subsection{Multinomial Policy Parameterization}
The goal is to design a policy parameterization that allows us to map each state $\mathbf s=(\mathbf q, \mathbf y)$ to a state-dependent distribution $\mathbb P(\cdot|\mathbf s)$ over $\mathcal A(\mathbf y)$, which denotes the set of valid link flows given network link state $\mathbf y$. To this end, we utilize a multinomial distribution parameterization.  A multinomial distribution is a probability distribution that describes the outcomes of multiple trials where each trial has a fixed number of possibilities but only one possibility can occur per trial. In our case, $y_{m,t}$ is the number of ``trials" and $K+1$ is the number of possibilities.  For each link, this is represented as the distribution  $\mathbb P(\mathcal A( y_{m,t})|\mathbf s_t)=Multinomial(\mathbf p_{m,t}, y_{m,t} )$ where $\mathbf p_{m,t}=(p_{m,1,t}, p_{m,2,t}, ..., p_{m,K+1,t})$ and $ p_{m,k+1,t}$ is the probability of choosing class $k$ in one of these independent trials.  We can thus represent the distribution over all valid link flows as the product of $M$ independent multinomial distributions $\mathbb P(\mathcal A(\mathbf y_t)|\mathbf s_t)=$ $\mathbb \prod_{m=1}^{M} P(\mathcal A( y_{m,t})|\mathbf s_t)$.  A sample $\mathbf A_t$ from $\mathbb P(\mathcal A(\mathbf y_t)|\mathbf s_t)$ is guaranteed to satisfies \cref{eq: link_const}. 

To encode this set of $M$  independent multinomial distribution, we use a standard MLP architecture for the input and hidden layers . The output layers have no weights themselves, and are comprised of a $M\times (K+1)$ linear layer and $M$ stacked multinomial layers.  The output of the final linear layer is  $\mathbf{\check p}$, where the $m$th set of $K+1$ elements correspond to the unnormalized logit values for $\mathbf A_{m,t}$. Each set $\mathbf{\check{p}}[m(K+1):]$ is then fed into a multinomial layer which performs the normalization. Unlike the softmax or the Multivariate Gaussian/Beta distribution layers, the multinomial layers require the additional input $y_{m,t}$ as the distribution is not only dependent on the probability parameters.  Recall, for our policy gradient methods we require the ability to sample from $\pi_\theta(\cdot|\mathbf s_t)=\mathbb P(\mathcal A(\mathbf y_t)|\mathbf s_t)$ and to compute log-probabilities $\log(\pi_\theta(\mathbf A_t|\mathbf s_t))=\log \mathbb P(\mathcal A(\mathbf y_t)=\mathbf A_t|\mathbf s_t)$.  From each multinomial layer, the distribution $\mathbb P(\mathcal A( y_{m,t})|\mathbf s_t)$ can be sampled from and combined with the output of the other multinomial layers to yield $\mathbf A_t\sim \mathbb P(\mathcal A(\mathbf y_t)|\mathbf s_t)$.   Given decision $\mathbf A_t$, we can compute the log-probability of each row $\mathbf A_{m,t}$ as:  
\begin{align}
     \log \mathbb P(\mathcal A_{m,t}=\mathbf A_{m,t}|\mathbf s_t) = & \sum_{k=1}^{K+1} A_{m,k,t}\log\left( p_{m,k,t}\right) \\&+\log\left(\Gamma\left(\sum_{k=1}^{K+1}\mathbf A_{m,k,t}+1\right)\right) - \sum_{k=1}^{K+1}\log\left(\Gamma\left( A_{m,k,t}+1\right)\right)  
 \end{align}
where $p_{m,k,t}$ are the corresponding probability obtained by passing $\mathbf s_t$ through neural network and $\Gamma(\cdot)$ is the gamma function. The gradient $\nabla_\theta$ of the log-probability for each multinomial layer is: 
\begin{equation}
\nabla_{\theta} \log\mathbb P(\mathbf A_{m,t};\mathbf s_t,\theta) = \sum_{k=1}^{K+1} A_{m,k,t}\nabla_{\theta}\log p_{m,k,t}    
\end{equation}
as $p_{m,k,t}$ is the only term that depends on the neural network weights $\theta$. Additionally, the log-probability of full decision $\mathbf A_t = (\mathbf A_{m,t}, ..., \mathbf A_{M,t})^T$ is computed by taking the product of the individual log-probabilities: 
\begin{align}
    \log \pi_\theta(\mathbf A_t|\mathbf s_t) = \log \mathbb P(\mathcal A_t=\mathbf A_t|\mathbf s_t,\theta) = \prod_{m=1}^{M} \log \mathbb P(\mathcal A_{m,t}=\mathbf A_{m,t}|\mathbf s_t,\theta)
\end{align} and differentiation follows the same procedure.

\subsection{Action Masking}
The Multihop architecture ensures all samples  satisfy the link capacity constraint  for Multihop problems and the server selection architecture allows us to satisfy the link activation constraint for single server selection problems. However, there are additional constraints for each type of problem that have not been captured by the architecture alone and require \textit{action masking} \cite{huang2022}.  Action masking works by applying some state-dependent masking function $\text{Mask}(\mathbf{\check p}, \mathbf s)=\mathbf p'$ to the logits outputs $\mathbf{\check p}$ of the neural networks prior to either the Multinomial or Softmax operations.  The goal of action masking is to prohibit invalid actions from ever being sampled, thus typically the masking function sets the logits corresponding to invalid actions to a large negative number such that the discrete distribution determined by the normalized mask logits sets the probability of selecting invalid actions to zero.

For the Multihop Multi class control problem we need to ensure that packets of class $k$ are not routed to nodes from which they cannot reach their destination from.  This is not a state-dependent mask and thus the masking function can be precomputed using Dijkstra's algorithm given the topology $\mathcal G$.    For the single server selection problem we add the additional constraint that all decisions must be \textit{work conserving} which in this case means a link cannot be chosen if that link has zero capacity and we cannot choose the idling action if there exists an active link with packets in the corresponding queue.  Thus this masking function is a function of the current state, and must be computed in each time-step.  

These methodologies, combining multinomial policy parameterization with action masking, provide a robust framework for addressing the complex constraints and high-dimensional action spaces inherent in Multihop Multiclass network control problems. This approach allows for efficient exploration and optimization in these complex policy spaces, enhancing the effectiveness and applicability of policy gradient algorithms in network control scenarios.

\section{Algorithm Hyperparameters}
All algorithms employed a learning rate (\(\alpha\)) of \(3e-4\). In single-hop environments, the rollout lengths were set to \(T_e=2048\), whereas in multi-hop environments, the rollout lengths were shorter, \(T_e=512\). Across all algorithms, each training iteration involved 5 update epochs, with the data segmented into 8 minibatches for each update. For the STOP-PPO algorithm \cite{pavse2023}, parameters \(\tau_{\text{warmup}}=1.0e6\) and \(\beta_{\text{STOP}}=1e-6\) were optimized. This optimization was achieved through a parameter sweep in the SH1 environment, selecting the set of hyperparameters that delivered the highest average performance. A PPO clipping coefficient (\(\epsilon\)) of 0.2 was consistently applied across all algorithms. Furthermore, for the average value constrained critic \cite{ma2021}, a moving average step-size (\(\alpha\)) of 0.2 and a bias coefficient (\(\nu\)) of 0.1 were implemented.

\section{Environment Details} \label{sec: env_details}

The details for each environment used for \Cref{sec: experiments} are given below.  Each class table contains the class id $k$,  arrival node, destination node, arrival amounts with their corresponding probabilities.  For example, for class 1 of SH1 all packets arrive to node 1 and leave the network upon arriving to the base-station (BS).  No packets arrive with probability 0.7 and a single packet arrives with probability 0.3.  Each link table has the rows indexed by the corresponding link ID where ID $m$ corresponds to $y_{m,t}$.  The subsequent columns contain the start node, end node, and the possible link states $y_{m,t}$ with their corresponding probabilities.  For example, link 2 goes from node 2 to the base-station and has link state $y_{2,t}$ equal to $[0, 1, 2]$ with probabilities $[0.2, 0.5, 0.3]$ respectively.

\begin{table}[!h]
\begin{minipage}{0.5\linewidth}
    \centering
    \begin{tabular}{|c|c|c|c|c|}
        \hline
        $k$ & Source & Destination & Arrival & Probability \\
        \hline
        1 & 1 & BS & [0, 1] & [0.7, 0.3] \\
        \hline
        2 & 2 & BS & [0, 1] & [0.3, 0.7] \\
        \hline
    \end{tabular}
    \caption{SH1 Class Information}
\end{minipage}%
\begin{minipage}{0.5\linewidth}
    \centering
    \begin{tabular}{|c|c|c|c|}
        \hline
        $m$ & (Start, End) & Capacity & Probability \\
        \hline
        1 & (1, BS) & $[0, 1]$ & $[0.5, 0.5]$ \\
        \hline
        2 & (2, BS) & $[0, 1, 2]$ & $[0.2, 0.5, 0.3]$ \\
        \hline
    \end{tabular}
    \caption{SH1 Link Information}
\end{minipage}
\end{table}
\begin{table}[!h]
\begin{minipage}{0.49\linewidth}
    \centering
    \begin{tabular}{|c|c|c|c|c|}
        \hline
        $k$&  Source&Destination&  Arrival& Probability \\
        \hline
        1 &  1&BS&  [0, 1]&[0.75, 0.25]\\
        \hline
        2 &  2&BS&  [0, 1]&[0.5, 0.5]\\
        \hline
        3 &  3&BS&  [0, 1]&[0.5, 0.5]\\
        \hline
        4 &  4&BS&  [0, 1]&[0.5, 0.5]\\
        \hline
    \end{tabular}
    \caption{SH2 Class Information}
\end{minipage}
\begin{minipage}{0.49\linewidth}
    \centering
    \begin{tabular}{|c|c|c|c|}
        \hline
         $m$&(Start, End)& Capacity & Probability \\
        \hline
         1&(1, BS) & $[0, 1]$ & $[0.3, 0.7]$ \\
        \hline
         2&(2, BS) & $[0, 1, 2]$ & $[0.2, 0.5, 0.3]$ \\
        \hline
         3&(3, BS) & $[0, 1, 2]$ & $[0.1, 0.1, 0.8]$ \\
        \hline
         4&(4, BS) & $[0, 1, 2, 3]$ & $[0.25, 0.25, 0.25, 0.25]$ \\
        \hline
    \end{tabular}
    \caption{SH2 Link Information}
\end{minipage}
\end{table}

\begin{table}[!h]
\begin{minipage}{0.49\linewidth}
    \centering
    \begin{tabular}{|c|c|c|c|c|}
        \hline
        $k$& Source & Destination &  Arrivals&Probability \\
        \hline
        1 & 1 & 4 &  [0, 1]&[0.2, 0.8]\\
        \hline
        2 & 1 & 4 &  [0, 1]&[0.6, 0.4]\\
        \hline
    \end{tabular}
    \caption{MH1 class information}
\end{minipage}
\begin{minipage}{0.5\linewidth}
    \centering
    \begin{tabular}{|c|c|c|c|}
        \hline
         $m$&(Start, End)& Capacity & Probability \\
        \hline
         1&(1,2) & $[0, 1, 2]$ & $[0, 0.5, 0.5]$ \\
        \hline
         2&(1,3) & $[0, 1]$ & $[0.5, 0.5]$ \\
        \hline
         3&(2,3) & $[0, 2]$ & $[0.2, 0.8]$ \\
        \hline
         4&(3,2) & $[0, 1]$ & $[0.2, 0.8]$ \\
        \hline
         5&(2,4) & $[0, 1]$ & $[0.5, 0.5]$ \\
        \hline
         6&(3,4) & $[0, 2]$ & $[0.2, 0.8]$ \\
        \hline
    \end{tabular}
    \caption{MH1 link information}
\end{minipage}  
\end{table}

\begin{table}[!h]
\begin{minipage}{0.49\textwidth}
    \centering
    \begin{tabular}{|c|c|c|c|c|}
        \hline
        $k$ & Source & Destination & Arrival & Probability \\
        \hline
        1 & 1 & 5 & [0, 4] & [0.5, 0.5] \\
        \hline
        2 & 1 & 6 & [0, 3] & [0.0, 1.0] \\
        \hline
        3 & 1 & 7 & [0, 3] & [0.4,0.6] \\
        \hline
        4 & 1 & 8 & [0, 2] & [0.2,0.8] \\
        \hline
    \end{tabular}
    \caption{MH2 Class Information}
\end{minipage}
\begin{minipage}{0.49\textwidth}
    \centering
\begin{tabular}{|c|c|c|c|}
\hline
$m$ & (Start, End) & Capacity & Probability \\
\hline
1 & (1,2) & [0, 2, 4] & [0.2, 0.4, 0.4] \\
\hline
2 & (1,3) & [3, 5] & [0.5, 0.5] \\
\hline
3 & (1,4) & [0, 2, 4] & [0.2, 0.4, 0.4] \\
\hline
4 & (2,5) & [0,3] & [0,1] \\
\hline
5 & (2,6) & [1, 3] & [0.5, 0.5] \\
\hline
6 & (3,6) & [2, 4] & [0.5, 0.5] \\
\hline
7 & (3,7) & [2, 4] & [0.5, 0.5] \\
\hline
8 & (4,7) & [0, 2] & [0.2, 0.8] \\
\hline
9 & (4,8) & [0,3] & [0, 1.0] \\
\hline
10 & (2,3) & [2, 4] & [0.5, 0.5] \\
\hline
11 & (3,2) & [2, 4] & [0.5, 0.5] \\
\hline
12 & (3,4) & [2, 5, 8] & [0.2, 0.4, 0.4] \\
\hline
13 & (4,3) & [2, 5, 8] & [0.2, 0.4, 0.4] \\
\hline
\end{tabular}
\caption{MH2 Link Information}
\end{minipage}    
\end{table}

\end{document}